  \providecommand\BibTeX{{%
    \normalfont B\kern-0.5em{\scshape i\kern-0.25em b}\kern-0.8em\TeX}}}
\begin{document}

%%
%% The "title" command has an optional parameter,
%% allowing the author to define a "short title" to be used in page headers.
\title{An Efficient Learning Framework For Federated XGBoost Using Secret Sharing And Distributed Optimization}

%%
%% The "author" command and its associated commands are used to define
%% the authors and their affiliations.
%% Of note is the shared affiliation of the first two authors, and the
%% "authornote" and "authornotemark" commands
%% used to denote shared contribution to the research.
\author{Lunchen Xie}
\affiliation{%
 \institution{School of Software Engineering, Tongji University}
 \streetaddress{Cao'an Highway 4800}
 \city{Shanghai}
 \state{Shanghai}
 \country{China}}
\email{lcxie@tongji.edu.cn}

\author{Jiaqi Liu}
\affiliation{%
 \institution{School of Software Engineering, Tongji University}
 \streetaddress{Cao'an Highway 4800}
 \city{Shanghai}
 \state{Shanghai}
 \country{China}}
\email{liujustdoit@outlook.com}

\author{Songtao Lu}
\affiliation{%
 \institution{IBM Research AI, IBM Thomas J. Watson Research Center}
 \streetaddress{1101 Kitchawan Road, 10598}
 \city{Yorktown Heights}
 \state{New York}
 \country{United States}}
\email{songtao@ibm.com}

\author{Tsung-Hui Chang}
\affiliation{%
 \institution{School of Science and Engineering, The Chinese University of Hong Kong, Shenzhen}
 \streetaddress{Longxiang Avenue 2001}
 \city{Shenzhen}
 \state{Guangdong}
 \country{China}}
\email{tsunghui.chang@ieee.org}

\author{Qingjiang Shi}
\affiliation{%
 \institution{School of Software Engineering, Tongji University}
 \streetaddress{Cao'an Highway 4800}
 \city{Shanghai}
 \state{Shanghai}
 \country{China}}
\email{shiqj@tongji.edu.cn}

%%
%% By default, the full list of authors will be used in the page
%% headers. Often, this list is too long, and will overlap
%% other information printed in the page headers. This command allows
%% the author to define a more concise list
%% of authors' names for this purpose.
\renewcommand{\shortauthors}{Xie, et al.}

%%
%% The abstract is a short summary of the work to be presented in the
%% article.
\begin{abstract}
  XGBoost is one of the most widely used machine learning models in the industry due to its superior learning accuracy and efficiency. Targeting at data isolation issues in the big data problems, it is crucial to deploy a secure and efficient federated XGBoost (FedXGB) model. Existing FedXGB models either have data leakage issues or are only applicable to the two-party setting with heavy communication and computation overheads. In this paper, a lossless multi-party federated XGB learning framework is proposed with a security guarantee, which reshapes the XGBoost's split criterion calculation process under a secret sharing setting and solves the leaf weight calculation problem by leveraging distributed optimization. Remarkably, a thorough analysis of model security is provided as well, and multiple numerical results showcase the superiority of the proposed FedXGB compared with the state-of-the-art models on benchmark datasets.
  %. Finally, we verify the efficacy of our model on benchmark datasets.
\end{abstract}

%%
%% The code below is generated by the tool at http://dl.acm.org/ccs.cfm.
%% Please copy and paste the code instead of the example below.
%%
\begin{CCSXML}
<ccs2012>
   <concept>
       <concept_id>10002978.10003022.10003028</concept_id>
       <concept_desc>Security and privacy~Domain-specific security and privacy architectures</concept_desc>
       <concept_significance>500</concept_significance>
       </concept>
 </ccs2012>
\end{CCSXML}
\begin{CCSXML}
<ccs2012>
   <concept>
       <concept_id>10010147.10010257</concept_id>
       <concept_desc>Computing methodologies~Machine learning</concept_desc>
       <concept_significance>500</concept_significance>
       </concept>
 </ccs2012>
\end{CCSXML}

\ccsdesc[500]{Computing methodologies~Machine learning}

\ccsdesc[500]{Security and privacy~Domain-specific security and privacy architectures}

%%
%% Keywords. The author(s) should pick words that accurately describe
%% the work being presented. Separate the keywords with commas.
\keywords{Vertical federated learning, privacy, gradient descent, boosting, ensemble methods}

%%
%% This command processes the author and affiliation and title
%% information and builds the first part of the formatted document.
\maketitle

\section{Introduction}\label{sec1}
It is well known that XGBoost \cite{chen2016xgboost} is one of the most popular machine learning algorithms, which has been been widely used as a standard data modeling tool in world-famous Kaggle competitions as well as various real-world applications, e.g., product recommendation \cite{he2014practical}, fraud detection \cite{oentaryo2014detecting}, and online advertisement \cite{ling2017model}, etc. This method is generally implemented with centralized data storage. As the number of data sources increases, there would be a rising demand for collaboration of data from different organizations to build a more powerful XGBoost model. However, due to commercial competition and data privacy issues, raw data sharing among different organizations is strictly forbidden, leading to the so-called data isolation problem \cite{yang2019federated} and making centralized XGBoost infeasible.

To address the data isolation problem, federated learning was firstly proposed by Google \cite{konevcny2016federated} and then rapidly drew a great deal of attentions \cite{hardy2017private,liu2019boosting,cheng2019secureboost,gu2020federated,yang2019federated}. So far, there are mainly two federated learning schemes: \emph{horizontal federated learning} and \emph{vertical federated learning}. To be more specific, \emph{Horizontal federated learning} was proposed for the scenario where different data holders share the same feature space but different samples, while \emph{vertical federated learning} was proposed to handle the case when data holders share the same entity set but different feature space. This paper mainly focuses on \emph{vertical federated learning} for XGBoost. Our goal is to build a federated XGBoost model (FedXGB) jointly from vertically partitioned data held by different participants while ensuring data privacy. 

In the literature, many federated learning algorithms have been proposed for gradient tree boosting \cite{cheng2019secureboost,fang2020hybrid,Feng2020SecureGBM,liu2020federated,leung2020towards}. Among them, \cite{cheng2019secureboost,fang2020hybrid} are the two most closely related works to FedXGB on vertically partitioned data. The lossless privacy-preserving algorithm proposed by Cheng et al. \cite{cheng2019secureboost} is to train a high-quality tree boosting model using homomorphic encryption (HE) \cite{gentry2009fully}, but it may leak intermediate information like instance index and order of loss reductions in the training process. In addition, the encryption overhead might not be tolerable in the massive data scenario. To alleviate intermediate information leakage, Fang et al. \cite{fang2020hybrid} presented a framework based on secret sharing (SS) \cite{blakley1979safeguarding} to build secure XGBoost with vertically partitioned data. Since under SS all participants build local models only based on data slices (rather than raw data) sent from others, the SS-based FedXGB has better performance in terms of privacy and security. However it is only applicable to the case of two participants, and thus has limited applications. Moreover, the lossless division operation involved in SS-based XGBoost requires high computational complexity. In short, existing vertical FedXGB models either have the data leakage issue or are only applicable to the two-party setting with heavy communication and computation overheads, even though SS might have some privacy and security guarantees. These facts motivate us to study a secure and efficient vertical \emph{multi-party} SS-based FedXGB framework.

However, because the training process of the vanilla XGBoost model requires non-linear calculation, e.g., \emph{argmax} and \emph{division}, there exist two major following challenges in the adaptation of XGBoost to an efficient multi-party vertical federated learning model under the SS setting. First, the \emph{argmax} operation is indispensable to find the best split in XGBoost. In SS, all participants cannot have direct access to actual loss reductions of split candidates for security concerns. For two split candidates, the existing implementation of SS-\emph{argmax} \cite{fang2020hybrid} is based on judging the sign of the difference between their loss reductions. This process is conducted by comparing two shares of the difference bit by bit through multiplexers in the \emph{two-party} federated learning scenario, unfortunately, it is not feasible for the \emph{multi-party} scenario. Second, computing split criterion and leaf weight both need \emph{division} operations, whereas the SS approach cannot deal with \emph{division} operations directly by simply utilizing existing additive, subtraction, and multiplication primitives defined in SS. Complicated composition and iteration of these primitives can approximate \emph{division}, but requires high computational complexity. These two challenges hinder the development of efficient \emph{multi-party} vertical FedXGB.

To address the above challenges, we propose a lossless \emph{multi-party} vertical FedXGB learning framework for both classification and regression tasks, named as MP-FedXGB\footnote{https://github.com/HikariX/MP-FedXGB}. Specifically, to enable secure \emph{multi-party} collaborative modeling, we redesign the \emph{argmax} in the split criterion calculation procedure. Note that the expression of the difference between the loss reductions of every two split candidates contains complicated fractions. By reducing the fractions to a common denominator we reshape them to only one single fraction. Then, through judging on signs of numerator and denominator respectively, we can obtain the result of the \emph{argmax} operation without executing \emph{division} and using multiplexers, thus this way is suitable for the \emph{multi-party} scenario. Besides, we reshape the closed-form leaf weight calculation as minimization of a convex quadratic optimization problem, and develop a gradient-based distributed algorithm to solve the quadratic problem. It is worthy noting that the proposed reshaping above removes the requirement of division operation in the tree-building process, which results in boosting the efficiency of our framework greatly. In addition, we point out the potential security risks and put forward the corresponding solution. Finally, we demonstrate the efficiency of our FedXGB model by conducting multiple experiments on benchmark datasets. Our contributions are highlighted as follows:
\begin{itemize}
\item To the best of our knowledge, MP-FedXGB is the first \emph{multi-party} federated XGB learning framework on vertically partitioned data under the SS setting with high efficiency and scalability.
\item We propose a simple but very effective computation reshaping method for the calculation of split criterion and leaf weight, greatly boosting the training efficiency while preserving data security and privacy.
\item In order to solve the potential instance space leakage problem completely, we propose an extra security mechanism called \emph{First-Layer-Mask} for our framework, further enhancing the security of our framework.
\end{itemize}

\section{Related work}\label{sec2}
Recently, a number of federated learning methods have been proposed to tackle vertically partitioned data. Among them, some are about vertical federated XGBoost. In this section, we give a brief review of vertical federated learning, especially the vertical federated XGBoost, which is the focus of this paper.

\subsection{Vertical Federated Learning}
Vertically partitioned data \cite{skillicorn2008distributed} widely exists in modern data mining and machine learning applications, where data are provided by multiple providers, of which each maintains the records of different feature sets with common entities. The goal of vertical federated learning is to build the model jointly by adopting all the data while ensuring privacy for each data owner. In the literature, there are many privacy-preserving federated learning algorithms for vertically partitioned data in various applications, for example, federated linear regression \cite{karr2009privacy,gascon2017privacy}, federated logistic regression \cite{hardy2017private}, federated random forest\cite{liu2020federated}, federated XGBoost \cite{cheng2019secureboost,fang2020hybrid}, and federated support vector machine \cite{yu2006privacy,gu2020federated}.

Most of the above federated learning algorithms require only the matrix addition and multiplication primitives under the SS mechanism. However, FedXGB needs additional non-linear computation primitives, e.g., division and \emph{argmax}. Thus, the secure design proposed in \cite{hardy2017private} cannot be straightforwardly extended to FedXGB. To tackle the nonlinear computation issue, the work \cite{hardy2017private} proposed the Taylor expansion based method so that the nonlinear operation can be approximately implemented using the addition and multiplication primitives. However, this approximation could  result in accumulated errors, leading a decrease of the prediction accuracy eventually. By eliminating the nonlinear operations, our proposed vertically FedXGB algorithm is lossless in theory.

\subsection{Vertical Federated XGBoost}
Among the existing federated learning algorithms, \cite{cheng2019secureboost} and \cite{fang2020hybrid} are the two most related works, where HE-based FedXGB and SS-based FedXGB were proposed respectively based on different security schemes.

\textbf{HE-based FedXGB} 
Homomorphic encryption (HE) is a form of encryption allowing one to perform calculations on encrypted data without decrypting it. HE uses a public key to encrypt data and allows only the individual with the matching private key to access its unencrypted data. Thus HE is widely used in vertically federated XGBoost to preserve data privacy. Cheng et al. proposed a lossless HE-based privacy-preserving tree-boosting system called SecureBoost \cite{cheng2019secureboost} for vertically federated XGBoost.  However, this algorithm only promises no raw-data sharing, but would suffer intermediate information leakage. We can easily find that the second-order derivatives of data instances are plaintext for the participant who holds the label in SecureBoost. For regression problems and common mean squared error (MSE) loss function, the second-order derivative is 1 for every instance. So the sum of second-order derivatives of instances in one bucket indicates the count of instances in this bucket. If we use an equal-distance bucket for every feature to split the data, a bigger sum means more data in the bucket. If the split is done by another participant, the participant with the label can know the density in each bucket of this feature from others, thus the density distribution is leaked. In addition, the HE procedure is extremely slow and requires a large amount of memory, leading to low efficiency of possessing.

\textbf{SS-based FedXGB} 
SS was introduced independently by Shamir \cite{shamir1979share} and Blakley \cite{blakley1979safeguarding} in 1979. The idea is that one participant has private data which distributes among other participants in a way that none of them alone can recover the data. SS is also widely used in federated learning. To alleviate data leakage issue of \cite{cheng2019secureboost}, SS is introduced for FedXGB \cite{fang2020hybrid}. It contains additive, subtraction, and multiplication primitives, which support computation distributed in each participant using local shares of different data, and can prevent participants from guessing the original values. However, XGBoost needs non-linear operations, e.g., \emph{argmax} and \emph{division}, which are not defined directly in SS. For example, to find the best split when constructing a tree, operator \emph{argmax} is executed to select the split candidate with the biggest loss reduction. Vanilla methods compute all loss reductions, and compare the actual value between every two candidates. Since SS cannot access the original values, judging the sign of difference of two-loss reductions can also reach the goal. Current implementation \cite{fang2020hybrid} computes shares of the difference on two participants, and uses multiplexers to fulfill sign determination, which is inapplicable for multi-party. Besides, the difference of loss reductions and leaf weights both contain \emph{division} operation, while SS has no corresponding computation primitives. \cite{fang2020hybrid} takes Goldschmidt's method \cite{goldschmidt1964applications} to approximate \emph{division} by addition, subtraction, and multiplication, defined as primitives in SS. But each \emph{division} requires many iterations of the above primitives' combination to converge, and calculating differences contains a great deal of \emph{division}s as well, which further increases computational complexity. What's worse is that this approximation demands a carefully chosen initial value as the prerequisite of convergence, which is hardly feasible in practice.

In summary, adopting SS in FedXGB can avoid the disclosure of intermediate data comparing to HE-based XGB. However, the existing SS-based FedXGB is only applicable to the two-party setting with heavy communication and computation overheads. Moreover, to the best of our knowledge, there exists no secure multi-party FedXGB yet so far. This paper aims to provide the first multi-party federated XGB learning framework on vertically partitioned data under an SS setting with high computational efficiency and scalability.

\section{Problem Definition and Preliminaries}\label{sec3}
In this section, we first state our problem. Then we present a brief review of XGBoost with an outline of challenges in its adaptation to the federated setting. At last, we will describe computation primitives used to ensure security in our proposed vertically federated XGBoost.

\subsection{Problem Statement}\label{sec3.1}
We consider a set of $M$ distributed data holders $\{P_m\}_{m=1}^M$ (also referred to as participants) who want to train a XGBoost model by consolidating their respective dataset matrices $\{\boldsymbol{X}_m\in\mathbb{R}^{N\times J_m}\}_{m=1}^M$ with label vectors $\boldsymbol{y}_{N\times 1}$, where N denotes the number of instances, $m$ is the index of holders, and $J_m$ represents the size of feature set of $P_m$. For each participant $P_m$, it has feature set $\mathcal{F}_m$, which contains $J_m$ features. We allow some overlap among different feature sets $\mathcal{F}_m$, and define $J\triangleq|\cup_{m=1}^M\mathcal{F}_m|$. Moreover, we assume that the participants have aligned their data instances.

Our goal is to fully utilize data from multiple participants to build a secure vertical federated XGBoost. To achieve this goal, we define the following roles relevant to our framework.

\begin{definition}\label{definition1}
\textbf{Active participant:} Active participant is the one who wants to build a preferable model and seeks for collaboration. We denote this participant as $P_1$. It holds both data matrix $\boldsymbol{X}_1$ and label vector $\boldsymbol{y}$.  For supervised learning task based on XGBoost, access to label is necessary, but here we suppose that only $P_1$ can visit it.
\end{definition}
\begin{definition}\label{definition2}
\textbf{Auxiliary participant:} Auxiliary participants are the ones who have different features and are invited for federated modeling. We denote them as $P_m, m=2,...,M$. Each of them holds $\boldsymbol{X}_m$ respectively but cannot access to the label vector and any intermediate value from $P_1$.
\end{definition}
\begin{definition}\label{definition3}
\textbf{Coordinator:} In our framework, a third party is introduced for coordination such as generating Beaver's triples \cite{1992Efficient} and allocating permuted feature indices. We denote it as $C$. Since the coordinator has limited access to calculated results in the tree-building process and no access to raw data, it does not harm the system security.
\end{definition}

Some important notations are listed in Table~\ref{tab:table1}. The problem of federated XGBoost modeling can be stated as follows:\\
\textbf{Given:} Data matrices $\{\boldsymbol{X}_m\in \mathbb{R}^{N\times J_m}\}_{m=1}^M$ distributed on $M$ different participants $P_m$ and $\boldsymbol{y}_{N\times 1}$ stored on active participant $P_1$.\\
\textbf{Learn:} An ensemble model that stored partially on different participants containing $T$ trees. Their combination forms a complete XGBoost model. All the partial models have the same structure, while split information is only available to the data holder who owns the feature.\\
\textbf{Security premise:} Our proposed training method should be secure in the semi-honest adversarial setting. That is, the adversarial participants try to infer sensitive information by using all data handled by themselves. This setting is commonly considered by various ML algorithms \cite{2018ABY,2017SecureML}. Specifically, it is required that active participant doesn't collude with others.

\begin{table}[htbp]
    \centering
    \caption{Description of Notation}
    \begin{tabular}{ccl}
        \toprule
        Symbol & Description\\
        \midrule
        $M$ & number of participants\\
        $C$ & coordinator\\
        $P_1$ & active participant\\
        $P_m$ & auxiliary participants, $m=2,3,...,M$\\
        $N$ & size of instance set\\
        $J_m$ & size of feature set on participant $m, m=1,2,...,M$\\
        $g_i$ & first-order derivative of the $i$-th data instance\\
        $h_i$ & second-order derivative of the $i$-th data instance\\
        $\boldsymbol{X}_m\in \mathbb{R}^{N\times J_m}$ & data matrix of the $m$-th participant, $m=1,2,...,M$\\
        $\boldsymbol{y}\in \mathbb{R}^{N\times1}$ & label vector\\
        $T$ & number of trees in MP-FedXGB model\\
        \bottomrule
    \end{tabular}
    \label{tab:table1}
\end{table}

\subsection{A Brief Review of XGBoost \cite{chen2016xgboost}}\label{sec3.2}
Given a dataset $\boldsymbol{X}\in \mathbb{R}^{N\times J}$ with $N$ instances and $J$ features, XGBoost predicts the $i$-th instance $\boldsymbol{x}_i\in \mathbb{R}^{1\times J}$ by using $T$ regression functions, i.e.,
\begin{equation}\label{equation1}
    \hat{y}_i=\sum_{t=1}^T f_t(\boldsymbol{x}_i).
\end{equation}

The tree ensemble model in Eq.~\eqref{equation1} includes functions as
parameters and cannot be optimized using traditional optimization methods in Euclidean space. Instead, XGBoost is trained in an additive manner by calculating $\hat{y}_i^{(t)}=\hat{y}_i^{(t-1)}+f_t(\boldsymbol{x}_i)$. A second-order Taylor expansion is used to approximate the loss function $l(y_i, \hat{y}_i^{(t-1)}+f_t(\boldsymbol{x}_i))$ at the $t$-th iteration as follows:
\begin{equation}\label{equation2}
    \mathcal{L}^{(t)}\simeq\sum_{i=1}^n\left[l\left(y_i, \hat{y}_i^{(t-1)}\right)+g_if_t(\boldsymbol{x}_i)+\frac{1}{2}h_if_t^2(\boldsymbol{x}_i)\right]+\Omega(f_t).
\end{equation}
Here, $l(\cdot)$ is a differentiable convex loss function. For example, MSE loss is used for regression tasks and log-loss is for classification tasks. $\Omega(f_t)\triangleq\gamma U+\frac{1}{2}\lambda\vert\vert w\vert\vert^2$ is the regularization function, where $U$ is the number of leaves in the tree, $\gamma$ and $\lambda$ are parameters used to suppress tree number and weights respectively. $g_i=\partial\, l\left(y_i,\hat{y}_i^{(t-1)}\right)$ and $h_i=\partial^2\,l\left(y_i,\hat{y}_i^{(t-1)}\right)$ are the first- and second-order gradient statistics of the loss function at $\hat{y}_i^{(t-1)}$.

Normally enumeration of all the possible tree structures is impossible. Instead the model starts from a single leaf node containing all instances. Then the node recursively splits the current instance set $I$ to the left and right subset, denoted as $I_L$ and $I_R$ respectively. The loss reduction after the split is given by
\begin{equation}\label{equation3}
    \mathcal{L}_{split}\triangleq\frac{1}{2}\left[\frac{(\sum_{i\in I_L}g_i)^2}{\sum_{i\in I_L}h_i+\lambda}+\frac{(\sum_{i\in I_R}g_i)^2}{\sum_{i\in I_R}h_i+\lambda}-\frac{(\sum_{i\in I}g_i)^2}{\sum_{i\in I}h_i+\lambda}\right]-\gamma.
\end{equation}

The best split can be selected out by comparing $\mathcal{L}_{split}$s of all candidate splits, i.e., the split with the biggest $\mathcal{L}_{split}$. 
When the stop condition (no positive loss reductions or max depth reached) is met, each leaf $u$ can calculate its weight $w$ according to the following equation
\begin{equation}\label{equation4}
    w=-\frac{\sum_{i\in I_u}g_i}{\sum_{i\in I_u}h_i+\lambda}.
\end{equation}
Again, SS can't conduct \emph{division} directly, so this calculation should be approximated apparently, causing high time consumption.

From above, it is observed that, both Eq.~\eqref{equation3} and Eq.~\eqref{equation4} involve many \emph{division} operations. This poses challenges for federated XGBoost in SS because there is no existing SS primitives defined for \emph{division}. In \cite{fang2020hybrid}, all the fractions in Eq.~\eqref{equation3} and Eq.~\eqref{equation4} need to be approximated by iterative algorithms with the existing SS primitives, and thus the \emph{division} operation is time-consuming under SS. Moreover, the existing split finding method in \cite{fang2020hybrid} requires bit-by-bit comparison after computation of Eq.~\eqref{equation3}, which is not feasible for multi-participant federated learning XGBoost. This paper proposes some efficient methods to address the above challenges.

\subsection{Secret Sharing Computation Primitives}\label{sec3.3}
SS has recently been introduced in \cite{2018ABY} for \emph{multi-party} privacy-preserving data mining. In the following, we take advantage of SS to build our secure XGBoost. Under SS, computations are conducted on data shares. Intermediate values produced during the computation are insensitive, but the combination (through additive operation) of corresponding shares returns the computation result on original data. Thus some function values can be computed independently without exchanging sensitive information. We use $\langle\cdot\rangle$ to denote shared data slices. For example, $\langle x\rangle^1$, $\langle x\rangle^2$, and $\langle x\rangle^3$ means three shares of $x$ distributed on participants $1$, $2$, and $3$. Detailed SS computation primitives are provided as follows.

\textbf{SHR}: Whenever $P_m$ wants to share its data $x$ to others, it generates $M-1$ random numbers and denote them as $\langle x\rangle^{m'}$, $m'=1,...,M$, $m'\neq m$. Then $P_m$ sends all the shares to corresponding participants $P_{m'}$. Specially, the share $\langle x\rangle^m$ is computed by $x-\sum_{m'\neq m}^M\langle x\rangle^{m'}$ and stored locally as $P_m$'s own share. Thus $x=\sum_{m=1}^M\langle x\rangle^m$. We denote the above operation as $\langle x\rangle=\textbf{SHR}(x)$. If it is executed on $P_m$, it means $P_m$ distributes data to $P_{m'}$s, while $P_{m'}$s execute \textbf{SHR} only to receive the shares from $P_m$. For ease of exposition, we don't specify the executor apparently for the \textbf{SHR} primitive and also other primitives.

\textbf{ADD}: For two data $x$ and $y$, shared values $\langle x\rangle^m$ and $\langle y\rangle^m$ are stored in $P_m$. To compute $z=x+y$, every $P_m$ first computes $\langle z\rangle^m=\langle x\rangle^m+\langle y\rangle^m$ locally, and then combines $\langle z\rangle^m$ to yield $z=x+y$ where the second equality follows clearly from the definition of the \textbf{SHR} primitive. We denote \textbf{ADD} as $\langle z\rangle=\langle x\rangle+\langle y\rangle$, the same as ordinary addition operation.

\textbf{SUB}: This primitive can be expressed similarly as \textbf{ADD} by changing the operand. We denote \textbf{SUB} as $\langle z\rangle=\langle x\rangle-\langle y\rangle$, same as ordinary subtraction operation. It holds that $z=\sum_{m=1}^M\langle z\rangle^m=x-y$.

\textbf{MUL}: To calculate $z=x*y$, we utilize Arithmetic multiplication triple \cite{1992Efficient}. First, coordinator $C$ will generate three numbers $a$, $b$, and $c=a*b$. Then it generates shares $\{\langle a\rangle^m\}_{m=1}^M=\textbf{SHR}(a)$ and sends to $P_m, m=1,...,M$, the same for $b$ and $c$. $P_m$ computes $\langle e\rangle^m=\langle x\rangle^m-\langle a\rangle^m$ and $\langle f\rangle^m=\langle y\rangle^m-\langle b\rangle^m$ and sends them back to $P_1$. After that, $P_1$ recovers the value $e=\sum_{m=1}^M\langle e\rangle^m$ and $f=\sum_{m=1}^M\langle f\rangle^m$, broadcasts them to all other $P_m$s. Finally, $P_1$ computes $\langle z\rangle^1=e*f+f*\langle a\rangle^1+e*\langle b\rangle^1+\langle c\rangle^1$, while others compute $\langle z\rangle^m=f*\langle a\rangle^m+e*\langle b\rangle^m+\langle c\rangle^m$. It's easy to see that $\sum_{m=1}^M\langle z\rangle^m=x*y=z$. However, without knowing $a$ and $b$, $P_m$ won't get information of $x$ or $y$ from $e$ and $f$. We denote the above multiplication as $\langle z\rangle=\langle x\rangle\otimes\langle y\rangle$. 

The above primitives can be used in element-wise matrix computation. Moreover, it can be concluded that operations using secret sharing computation primitives have the same operational rules as arithmetic operations, which is stated in Theorem 3.4. Note that we generally suppress the superscript of $\langle\cdot\rangle$ to simplify notation.

\begin{theorem}\label{theorem1}
Operations using secret sharing computation primitives have the same operational rules as arithmetic operations.
\end{theorem}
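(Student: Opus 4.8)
The plan is to exhibit the share reconstruction map as a homomorphism from the algebra generated by the secret-sharing primitives into ordinary arithmetic, and then to transport the familiar algebraic laws across that homomorphism. Concretely, I would first fix the reconstruction operator $\phi$ defined by $\phi(\langle x\rangle)=\sum_{m=1}^M\langle x\rangle^m$, which by the definition of \textbf{SHR} recovers the underlying value $x$ from any admissible share tuple. Everything reduces to the statement that $\phi$ commutes with the three generators \textbf{ADD}, \textbf{SUB}, and \textbf{MUL}.

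First I would record the correctness identities already embedded in the definitions of Section~\ref{sec3.3}: $\phi(\langle x\rangle+\langle y\rangle)=\phi(\langle x\rangle)+\phi(\langle y\rangle)$ for \textbf{ADD}, the analogous relation for \textbf{SUB}, and $\phi(\langle x\rangle\otimes\langle y\rangle)=\phi(\langle x\rangle)\cdot\phi(\langle y\rangle)$ for \textbf{MUL}. The first two are immediate because shares are combined coordinate-wise, while the third is precisely the Beaver-triple verification $\sum_m\langle z\rangle^m=x\cdot y$ already carried out in the \textbf{MUL} paragraph. These three facts serve as the base cases asserting that $\phi$ respects each primitive.

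Next I would argue by structural induction on the computation tree that any expression assembled from shared operands via \textbf{ADD}, \textbf{SUB}, and \textbf{MUL} reconstructs, under $\phi$, to the corresponding arithmetic expression applied to the reconstructed leaves. The inductive step is a one-line invocation of the base identities together with the induction hypothesis that each subexpression's shares reconstruct correctly. This establishes that $\phi$ is a homomorphism onto $(\mathbb{R},+,-,\times)$. Since commutativity, associativity, and distributivity all hold in $\mathbb{R}$, any two share-level expressions that coincide as formal arithmetic expressions in their leaves are mapped by $\phi$ to the same real value, which is exactly the claim that the primitives obey the same operational rules as arithmetic.

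The main obstacle I anticipate is the interpretation of ``equality.'' Because \textbf{MUL} injects fresh randomness through the Beaver triple and involves reconstructing and broadcasting $e,f$, the share tuple emitted by a computation is \emph{not} a deterministic function of the input share tuples, so two algebraically-equal expressions need not produce identical share vectors. The resolution is to phrase the theorem as equality of reconstructed values under $\phi$ rather than literal equality of shares, and to rely on the per-primitive correctness guarantees instead of matching shares coordinate by coordinate. With that convention the algebraic laws follow cleanly from the homomorphism property, and the only point worth checking explicitly is that mixing \textbf{MUL} with \textbf{ADD}/\textbf{SUB} (the distributive law) is covered — which it is, since the induction never distinguishes which primitive sits at an internal node of the computation tree.
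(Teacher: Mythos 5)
Your proposal is correct, and it rests on the same underlying fact as the paper's proof --- namely the per-primitive correctness identities, i.e.\ that the sum of the output shares of \textbf{ADD}, \textbf{SUB}, and \textbf{MUL} equals the corresponding arithmetic result on the reconstructed inputs --- but you organize the argument differently. The paper verifies commutativity, associativity, and distributivity one law at a time, by writing out chains of equalities such as $\sum_{m}(\langle x_1\rangle^m\otimes\langle x_2\rangle^m)\otimes\langle x_3\rangle^m=\sum_m\langle x_1 x_2 x_3\rangle^m=\sum_m\langle x_1\rangle^m\otimes(\langle x_2\rangle^m\otimes\langle x_3\rangle^m)$, treating intermediate products as new shared values. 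You instead package the correctness identities as the statement that the reconstruction map $\phi$ is a homomorphism onto $(\mathbb{R},+,-,\times)$ and transport all the laws at once by structural induction on the computation tree. Your route is more general --- it covers arbitrary compositions of the primitives in one stroke, which is in fact what the paper's Theorem 3.5 (losslessness) actually needs, whereas the paper's case-by-case check only exhibits three specific instances. You also make explicit a point the paper leaves implicit: because \textbf{MUL} injects fresh Beaver-triple randomness, two algebraically equal expressions need not yield identical share vectors, so ``same operational rules'' must be read as equality of reconstructed values under $\phi$, not of shares. That clarification strengthens rather than weakens the claim. The paper's version buys concreteness and self-containedness for a reader who wants to see the share-level manipulations spelled out; yours buys generality and a precise statement of what is being asserted.
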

\begin{proof} 
Basic operational rules include commutative law, associative law, and distributive law. Since \textbf{ADD} and \textbf{SUB} are exactly the same as common addition and subtraction, commutability and associativity are naturally satisfied. For \textbf{MUL}, if the original data are $x_1$ and $x_2$, for each participant $P_m$, $x_1*x_2$ is expressed as $\sum_{m=1}^M\langle x_1\rangle^m\otimes\langle x_2\rangle^m=\sum_{m=1}^M\langle x_1*x_2\rangle^m=\sum_{m=1}^M\langle x_2*x_1\rangle^m=\sum_{m=1}^M\langle x_2\rangle^m\otimes\langle x_1\rangle^m$, both compute $x_1*x_2$, thus the commutative law is satisfied. As for associativity, suppose we have $x_1$, $x_2$, and $x_3$, what we expect to get is $x_1*x_2*x_3$. The associativity law follows because we have:
\begin{equation}\label{equation5}
\begin{aligned}
    &\sum_{m=1}^M(\langle x_1\rangle^m\otimes\langle x_2\rangle^m)\otimes\langle x_3\rangle^m
    \\
    =&\sum_{m=1}^M\langle x_1*x_2\rangle^m\otimes\langle x_3\rangle^m\\
    =&\sum_{m=1}^M\langle x_1*x_2*x_3\rangle^m\\
    =&\sum_{m=1}^M\langle x_1\rangle^m\otimes\langle x_2*x_3\rangle^m\\
    =&\sum_{m=1}^M\langle x_1\rangle^m\otimes(\langle x_2\rangle^m\otimes\langle x_3\rangle^m).
\end{aligned}
\end{equation}
Here the second equation means shares of multiplication between $x_1*x_2$ and $x_3$ can restore the same value as $x_1*x_2*x_3$, if we treat $x_1*x_2$ as a whole term. So associative law on \textbf{MUL} primitive is the same as ordinary multiplication. What's more, distributivity law follows because we have $\sum_{m=1}^M\langle x_1*(x_2\pm x_3)\rangle^m=\sum_{m=1}^M\langle x_1*x_2\pm x_1*x_3\rangle^m=\sum_{m=1}^M\left(\langle x_1*x_2\rangle^m\pm\langle x_1*x_3\rangle^m\right)$.
\end{proof}

The above results imply immediately the following theorem.

\begin{theorem}\label{theorem2}
Operations using SS computation primitives are completely lossless.
\end{theorem}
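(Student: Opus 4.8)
The plan is to reduce losslessness to two facts: (i) each individual SS primitive reconstructs the exact arithmetic result, with no approximation step anywhere, and (ii) compositions of primitives inherit this exactness. First I would make precise what ``lossless'' should mean here: for any expression $\phi$ built out of the operations $+$, $-$, $*$ applied to plaintext inputs $x_1,\dots,x_k$, if we replace each operation by its SS counterpart $\textbf{ADD}$, $\textbf{SUB}$, $\textbf{MUL}$ acting on the shares $\langle x_1\rangle,\dots,\langle x_k\rangle$, then summing the resulting shares returns exactly $\phi(x_1,\dots,x_k)$ --- i.e. there is no discrepancy between the SS computation and the corresponding plaintext computation.

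The base cases come directly from the definitions in Section~\ref{sec3.3}. The $\textbf{SHR}$ primitive gives $x=\sum_{m=1}^M\langle x\rangle^m$ by construction; $\textbf{ADD}$ and $\textbf{SUB}$ give $\sum_{m=1}^M\langle z\rangle^m=x\pm y$ termwise; and crucially $\textbf{MUL}$, via the Beaver triple, yields $\sum_{m=1}^M\langle z\rangle^m=x*y$ \emph{exactly}, with no iterative approximation. This last point is the whole substance of the claim: unlike Goldschmidt-style division approximation, none of these four primitives ever rounds, truncates, or approximates, so the only source of error in prior SS-based XGBoost (the division approximation) is precisely what our reshaping of Eqs.~\eqref{equation3} and~\eqref{equation4} has removed.

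Then I would argue the inductive step by structural induction on the expression tree of $\phi$. Suppose two subexpressions $\phi_1$ and $\phi_2$ have been computed losslessly, so their combined shares reconstruct $\phi_1$ and $\phi_2$ respectively and each is itself a valid sharing. Applying one more primitive to these shares reconstructs $\phi_1\pm\phi_2$ or $\phi_1*\phi_2$ exactly, by the base cases applied to the ``virtual'' inputs $\phi_1,\phi_2$. Here is where Theorem~\ref{theorem1} is indispensable: it guarantees that the SS primitives obey the same commutative, associative, and distributive laws as ordinary arithmetic, so any regrouping or reordering of terms needed to align the SS evaluation with the plaintext evaluation of $\phi$ leaves the reconstructed value unchanged. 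Chaining these exact steps along the whole tree shows the final reconstruction equals $\phi(x_1,\dots,x_k)$.

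The main obstacle is not any hard calculation but pinning down the right induction hypothesis and isolating exactly where Theorem~\ref{theorem1} is used: losslessness for a single primitive is immediate, but for a \emph{composition} one must ensure both that the intermediate shares produced at each node are themselves valid sharings (so the next primitive's base-case guarantee applies) and that algebraic restructuring never perturbs the value --- both of which are supplied by closure under the primitives together with the operational laws of Theorem~\ref{theorem1}. I would also note explicitly that the argument is carried out in exact arithmetic, so the ``lossless'' claim is a statement about the algorithm design (that all division and approximation have been eliminated from the tree-building computations) rather than about any finite-precision implementation.
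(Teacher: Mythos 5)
Your proposal is correct and follows essentially the same route as the paper's own proof: both reduce losslessness to the exactness of the individual \textbf{SHR}, \textbf{ADD}, \textbf{SUB}, and \textbf{MUL} primitives and then invoke Theorem~\ref{theorem1} to conclude that compositions reconstruct the same value as the plaintext arithmetic. Your version is more careful than the paper's (which asserts the composition step in one sentence), since you make the structural induction and the validity of intermediate sharings explicit, but the underlying argument is identical.
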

\begin{proof}
From theorem~\ref{theorem1} we know all the primitives have the same operational rules as the ordinary arithmetic computations, so the same calculation procedure as the ordinary computations is ensured. All types of computations between data shares $\langle x\rangle^m$ and $\langle y\rangle^m$ on all participants $P_m$ will lead to corresponding arithmetic computation result based on $x$ and $y$. Thus all the computation procedures in the secret sharing framework are the same as corresponding arithmetic computations so that the lossless property is guaranteed.
\end{proof}

\section{MP-FedXGB}\label{sec4}
With the preliminaries above, we are now ready to introduce our secure federated XGB framework, MP-FedXGB. As shown in Fig.~\ref{fig:framework}. where we take four participants as an example, all participants exchange intermediate data shares (rather than raw data) to others and then each participant performs computations simultaneously to build the tree for XGBoost. Note that, if $P_m$ has split information in one node, then this node is colored. Otherwise, it is grayed. Finally, every participant can generate a partial tree model, which is a portion of the complete tree of MP-FedXGB. In what follows, we elaborate our proposed MP-FedXGB by dividing it into algorithms of training, tree construction, leaf weight computation, and prediction, along with some sub-functions related to tree construction.

\begin{figure}[htbp]
    \centering
    \includegraphics[width=9cm]{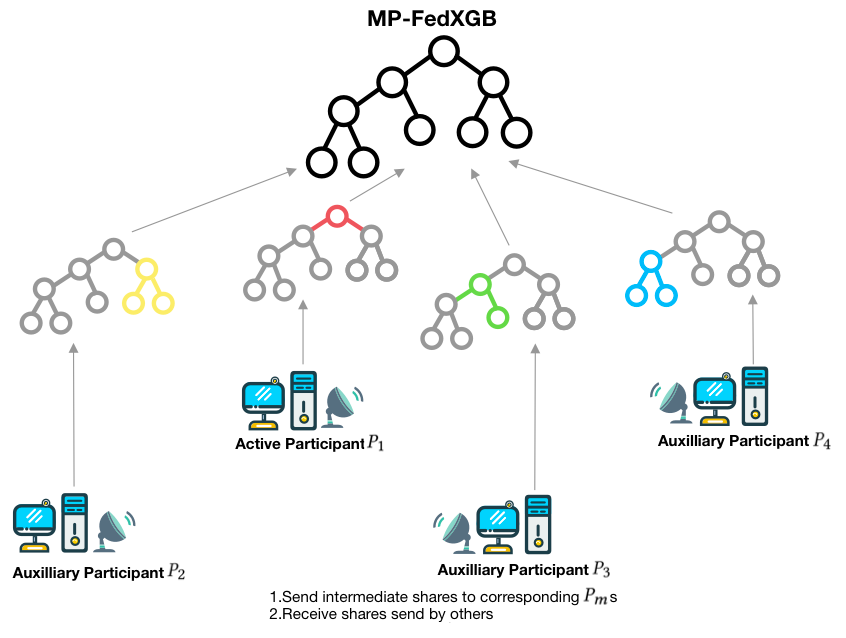}
    \caption{MP-FedXGB Framework.}
    \label{fig:framework}
\end{figure}

\subsection{Secure Training framework}\label{sec4.1}
We start from the training algorithm as shown in Algorithm~\ref{alg:train}. This algorithm will execute a $T$-round for-loop to generate $T$ trees (line 5-11). The input $\boldsymbol{X}$ represents $\boldsymbol{X}_m$ for each $P_m$, while $\boldsymbol{y}$ and $\hat{\boldsymbol{y}}$ are only reachable by $P_1$, the active participant. Others just take two placeholders as input. \textbf{SecureFit} and \textbf{SecurePredict}, detailed later, are invoked $T$ times on every $P_m$ simultaneously to build $T$ trees with label vector $\boldsymbol{y}$ and to produce prediction vector $\hat{\boldsymbol{y}}_t$ at the $t$-th iteration. Only $P_1$ holds the valid prediction $\hat{\boldsymbol{y}}$ for the current tree and updates it (line 9). Finally, a model with $T$ trees is built.
\begin{algorithm}[h!]
   \caption{Secure Training Framework}
   \label{alg:train}
\begin{algorithmic}[1]
    \STATE \textbf{function} \textbf{SecureTrain}($\boldsymbol{X}$,$\boldsymbol{y}$,$\hat{\boldsymbol{y}}$,$T$)
    \STATE // Inputs: instance set $\boldsymbol{X}_m$ for each $P_m$; label $\boldsymbol{y}$ and prediction $\hat{\boldsymbol{y}}$ from $P_1$; tree number $T$.
    \STATE $Trees=[]$ // Initialize empty tree list.
    \STATE $\hat{\boldsymbol{y}}\leftarrow \mathbf{0}_{N\times1}$
    \FOR{$t=1,2,...,T$}
    \STATE $tree_t\leftarrow\textbf{SecureFit}(\mathbf{X},\boldsymbol{y},\hat{\boldsymbol{y}})$ // Generate $T$ trees.
    \STATE $\hat{\boldsymbol{y}}_t\leftarrow tree_t.\textbf{SecurePredict}(\boldsymbol{X})$
    \IF{in $P_1$}
    \STATE $\hat{\boldsymbol{y}}\leftarrow\hat{\boldsymbol{y}}+\hat{\boldsymbol{y}}_t$ // Only $P_1$ have valid predictions, thus it updates final predictions.
    \ENDIF
    \STATE $Trees.append(tree_t)$
    \ENDFOR
    \STATE \textbf{return} $Trees$
\end{algorithmic}
\end{algorithm}

\subsection{Secure Tree Construction}\label{sec4.2}
In beginning, every participant executes Algorithm~\ref{alg:fit} for data preparation. $P_1$ computes necessary variables like first-order gradient vector $\boldsymbol{g}$ and second-order gradient vector $\boldsymbol{h}$, and shares secret slices using \textbf{SHR} to others (line 4-7). What's more, the instance index is another sensitive variable. In consideration of stricter security, as in \cite{fang2020hybrid}, we employ an indicator vector $\boldsymbol{s}_{N\times 1}$ to indicate the location of instances, which composed of only 0 and 1. If one instance belongs to the node, then its corresponding element in $\boldsymbol{s}$ of this node is 1, else 0. After that, the inner product is conducted between the indicator vector and gradient vectors to get a sum of the derivatives of instances in this node. Besides, element-wise multiplication between the indicator vector of one parent node and that of its child node produces the indicator vector for this child to represent which instance belongs to it after the split. Also, indicator vector $\boldsymbol{s}$ can be transformed to secret shares. 

After initialization, a partial tree model is built by recursively executing \textbf{SecureBuild}, shown in Algorithm~\ref{alg:build}. First, each participant computes the sum of derivatives of instances belonging to the current node (line 3). Then they all perform a termination check to examine the depth of the tree (line 4-7). If the maximum depth is reached, then the current node in the partial tree of each $P_m$ is set as a leaf node, and \textbf{SecureLeafWeight} is conducted to calculate leaf weights securely (which will be introduced in Section~\ref{sec4.4} in more details). Else participants will record objective loss in the current node with $\langle loss_n\rangle$ and $\langle loss_d\rangle$, which represent shares of nominator and denominator of $\frac{(\sum_{i\in I}g_i)^2}{\sum_{i\in I}h_i+\lambda}$ from Eq.~\eqref{equation3} respectively (line 8). 

After that, the model will traverse every value of every feature $j$, finding the split candidate with the biggest loss reduction. To speed up this process, we employ \textbf{SecureAggBucket} for every feature to aggregate derivatives into buckets (line 9). This method is clearly demonstrated in \cite{fang2020hybrid} as $SSSumBucket$. In other words, it's equal frequency binning for different features. With these split buckets, participants enumerate all possible splits of their features and choose the one with the highest loss reduction (line 10-18). Specifically, they finish this process by recording intermediate values respectively in matrix $\boldsymbol{\mathcal{G}}$ and $\boldsymbol{\mathcal{H}}$ with different subscripts. Subscripts $L$ and $R$ represent the left and right instance space. Also, these matrices records share values, so they are denoted as $\langle\boldsymbol{\mathcal{G}}_L\rangle$, $\langle\boldsymbol{\mathcal{G}}_R\rangle$, $\langle\boldsymbol{\mathcal{H}}_L\rangle$, and $\langle\boldsymbol{\mathcal{H}}_R\rangle$ respectively. Participants don't calculate Eq.~\eqref{equation3} to get $\mathcal{L}_{split}$ directly because fulfilling \emph{division} is of high complexity under the SS scheme (More detailed explanation will be presented in Section~\ref{sec4.3} and Section~\ref{sec6.1}). With this information, the feature index $j^*$ and the bucket index $k^*$ of the biggest (or best) $\mathcal{L}_{split}$ are generated by \textbf{SecureArgmax}, detailed in Section~\ref{sec4.3}. It also generates the sign of $\mathcal{L}_{split}$, meaning whether this split decreases the objective function or not. 

If the split candidate reduces the loss, the participant who holds $j^*$ sets corresponding elements of the indicator vector for the left or right child to 1 if instances belong to the left or right split, else 0 (line 21-31). The threshold value is denoted as $\mathcal{Q}_{j^*, k^*}$ (line 22), which is generated by \textbf{SecureAggBucket} and represents the feature value in the $k^*$-th split bucket of the $j^*$-th feature. Both indicators will be \textbf{SHR} to other participants respectively. By receiving these, all participants generate their indicator vectors for both branches using element-wise \textbf{MUL} with the parent indicator vector, the same for the first-order gradient and second-order gradient vectors (line 32-34). After that, every participant calls \textbf{SecureBuild} recursively to build its left or right branches. For the participant who has feature $j^*$, it will maintain the split info in its tree node, while others just record it as ``Dummy'' (line 37-40). If the best split doesn't reduce the objective loss value, $P_m$ will set the current node as a leaf too. \textbf{SecureLeafWeight} is called to get the leaf weights again. Tree nodes in partial trees of different $P_m$s will be generated synchronously, preserving the same tree structure in every participant.
\begin{algorithm}[htbp]
   \caption{Fit the data}
   \label{alg:fit}
\begin{algorithmic}[1]
   \STATE \textbf{function} \textbf{SecureFit}($\boldsymbol{X}$,$\boldsymbol{y}$,$\hat{\boldsymbol{y}}$)
   \STATE // Inputs: instance set $\boldsymbol{X}_m$ for each $P_m$; label $\boldsymbol{y}$ and prediction $\hat{\boldsymbol{y}}$ from $P_1$.
   \STATE $\boldsymbol{s}\leftarrow\mathbf{1}_{N\times1}$ // Generate the initial indicator vector.
   \IF{in $P_1$}
   \STATE $\boldsymbol{g},\boldsymbol{h}\leftarrow\frac{\partial l(\boldsymbol{y},\hat{\boldsymbol{y}})}{\partial \hat{\boldsymbol{y}}},\frac{\partial^2 l(\boldsymbol{y},\hat{\boldsymbol{y}})}{\partial \hat{\boldsymbol{y}}^2}$
   \STATE $\langle\boldsymbol{g}\rangle,\langle\boldsymbol{h}\rangle,\langle\boldsymbol{s}\rangle\leftarrow \textbf{SHR}(\boldsymbol{g}),\textbf{SHR}(\boldsymbol{h}),\textbf{SHR}(\boldsymbol{s})$
   \ENDIF
   \STATE $\textbf{Tree}\leftarrow\textbf{SecureBuild}(\langle\boldsymbol{g}\rangle,\langle\boldsymbol{h}\rangle,\langle\boldsymbol{s}\rangle)$
   \STATE \textbf{return} \textbf{Tree}
\end{algorithmic}
\end{algorithm}

\begin{algorithm}[htbp]
  \caption{Secure Tree Building}
  \label{alg:build}
\begin{algorithmic}[1]
  \STATE \textbf{function} \textbf{SecureBuild}($\langle\boldsymbol{g}\rangle$,$\langle\boldsymbol{h}\rangle$,$\langle\boldsymbol{s}\rangle$,$\boldsymbol{X}$)
  \STATE // Inputs: gradient vector $\langle\boldsymbol{g}\rangle^m$, hessian vector $\langle\boldsymbol{h}\rangle^m$, indicator vector $\langle\boldsymbol{s}\rangle^m$, and instance set $\boldsymbol{X}_m$ for each $P_m$; 
  \STATE $\langle g^\Sigma\rangle,\langle h^\Sigma\rangle\leftarrow\sum_n^N\langle\boldsymbol{g}\rangle_n,\sum_n^N\langle\boldsymbol{h}\rangle_n$ // Sum up elements within the vector.
  \IF{max depth reached}
  \STATE $\langle w\rangle\leftarrow \textbf{SecureLeafWeight}(\langle g^\Sigma\rangle,\langle h^\Sigma\rangle)$
  \STATE \textbf{return} $\textbf{Tree}(\langle w\rangle)$
  \ENDIF
  \STATE $\langle loss_n\rangle,\langle loss_d\rangle\leftarrow \langle g^\Sigma\rangle\otimes\langle g^\Sigma\rangle,\langle h^\Sigma\rangle+\langle\lambda\rangle$
  \STATE $\langle\boldsymbol{G}\rangle,\langle\boldsymbol{H}\rangle\leftarrow \textbf{SecureAggBucket}(\langle\boldsymbol{g}\rangle,\langle\boldsymbol{h}\rangle,\boldsymbol{X})$ // Aggregated statistics share $\langle\boldsymbol{G}\rangle_{J\times K}$, $\langle\boldsymbol{H}\rangle_{J\times K}$
  \FOR{$j=1,2,...,J$}
  \STATE $\langle g_L^\Sigma\rangle, \langle h_L^\Sigma\rangle\leftarrow 0,0$
  \FOR{$k=1,2,...,K$}
  \STATE $\langle g_L^\Sigma\rangle,\langle h_L^\Sigma\rangle\leftarrow \langle g_L^\Sigma\rangle+\langle\boldsymbol{G}\rangle_{j,k},\langle h_L^\Sigma\rangle+\langle\boldsymbol{H}\rangle_{j,k}$
  \STATE $\langle g_R^\Sigma\rangle,\langle h_R^\Sigma\rangle\leftarrow \langle g^\Sigma\rangle-\langle g_L^\Sigma\rangle,\langle h^\Sigma\rangle-\langle h_L^\Sigma\rangle$
  \STATE $\langle\boldsymbol{\mathcal{G}}_L\rangle_{j,k},\langle \boldsymbol{\mathcal{G}}_R\rangle_{j,k}\leftarrow\langle g_L^\Sigma\rangle\otimes\langle g_L^\Sigma\rangle,\langle g_R^\Sigma\rangle\otimes\langle g_R^\Sigma\rangle$
  \STATE $\langle\boldsymbol{\mathcal{H}}_L\rangle_{j,k},\langle \boldsymbol{\mathcal{H}}_R\rangle_{j,k}\leftarrow \langle h_L^\Sigma\rangle+\langle\lambda\rangle,\langle
  h_R^\Sigma\rangle+\langle\lambda\rangle$
  \ENDFOR
  \ENDFOR
  \STATE $j^*,k^*,sign \leftarrow \textbf{SecureArgmax}(\langle\boldsymbol{\mathcal{G}}_L\rangle,\langle\boldsymbol{\mathcal{G}}_R\rangle,\langle\boldsymbol{\mathcal{H}}_L\rangle,\langle\boldsymbol{\mathcal{H}}_R\rangle,\langle loss_n\rangle,\langle loss_d\rangle)$
  \IF{$sign$ is positive}
  \IF{$P_{m'}$ have feature $j^*$}
  \STATE $val\leftarrow\boldsymbol{\mathcal{Q}}^{m'}_{j^*,k^*}$//Quantile $\boldsymbol{\mathcal{Q}}^{m'}_{j^*,k^*}$ records the $k^*$-th split bucket value for $j^*$.
  \STATE $\boldsymbol{s}_L\leftarrow\mathbf{1}_{N\times 1}$
  \FOR{$n=1,...,N$}
  \IF{$\mathbf{X}_{n,j^*}>val$}
  \STATE ${\boldsymbol{s}_L}_{n}\leftarrow0$ // Set the $n$-th element to 0.
  \ENDIF
  \ENDFOR
  \STATE $\boldsymbol{s}_R\leftarrow\mathbf{1}_{N\times 1}-\boldsymbol{s}_L$
  \STATE $\langle\boldsymbol{s}^L\rangle,\langle\boldsymbol{s}^R\rangle\leftarrow \textbf{SHR}(\boldsymbol{s}^L),\textbf{SHR}(\boldsymbol{s}^R)$ // Share the local indicator vectors to others.
  \ENDIF
  \STATE $\langle\boldsymbol{s}_L\rangle,\langle\boldsymbol{s}_R\rangle\leftarrow \langle\boldsymbol{s}_L\rangle\otimes\langle\boldsymbol{s}\rangle,\langle\boldsymbol{s}_R\rangle\otimes\langle\boldsymbol{s}\rangle$
  \STATE $\langle\boldsymbol{g}_L\rangle,\langle\boldsymbol{g}_R\rangle\leftarrow \langle\boldsymbol{g}\rangle\otimes\langle\boldsymbol{s}_L\rangle,\langle\boldsymbol{g}\rangle\otimes\langle\boldsymbol{s}_R\rangle$
  \STATE $\langle\boldsymbol{h}_L\rangle,\langle\boldsymbol{h}_R\rangle\leftarrow \langle\boldsymbol{h}\rangle\otimes\langle\boldsymbol{s}_L\rangle,\langle\boldsymbol{h}\rangle\otimes\langle\boldsymbol{s}_R\rangle$
  \STATE $branch_{L}\leftarrow \textbf{SecureBuild}(\langle\boldsymbol{g}_L\rangle,\langle\boldsymbol{h}_L\rangle,\langle\boldsymbol{s}_L\rangle,\boldsymbol{X})$
  \STATE $branch_{R}\leftarrow \textbf{SecureBuild}(\langle\boldsymbol{g}_R\rangle,\langle\boldsymbol{h}_R\rangle,\langle\boldsymbol{s}_R\rangle,\boldsymbol{X})$
  \IF{$P_{m'}$ have feature $j^*$}
  \STATE \textbf{return} $\textbf{Tree}(val,j^*,branch_{L},branch_{R})$
  \ELSE \STATE \textbf{return} $\textbf{Tree}(branch_{L},branch_{R},Dummy)$
  \ENDIF
  \ELSE \STATE $\langle w\rangle\leftarrow \textbf{SecureLeafWeight}(\langle g^\Sigma\rangle,\langle h^\Sigma\rangle)$
  \STATE \textbf{return} $\textbf{Tree}(\langle w\rangle)$
  \ENDIF
\end{algorithmic}
\end{algorithm}

\subsection{Split Candidate Selection}\label{sec4.3}
In the tree learning phase, it is crucial to find the split with the maximum loss reduction expressed in Eq.\eqref{equation3}. Let $\{L_1,R_1\}$ and $\{L_2,R_2\}$ be any two split schema of set $I$, generated by two different split candidates. Furthermore, let us define $\mathcal{G}_X\triangleq(\sum_{i\in X}g_i)^2$ and  $\mathcal{H}_X\triangleq\sum_{i\in X}h_i+\lambda$, $\forall X\in\{I, L_1, L_2, R_1, R_2\}$. Then, with the above definitions, the loss reduction corresponding to the fist split schema $\{L_1,R_1\}$ becomes $\mathcal{L}_1=\frac{1}{2}\left(\frac{\mathcal{G}_{L_1}}{\mathcal{H}_{L_1}}+\frac{\mathcal{G}_{R_1}}{\mathcal{H}_{R_1}}-\frac{\mathcal{G}_I}{\mathcal{H}_I}\right)-\gamma$, while for the second candidate split schema $\{L_2,R_2\}$, the loss reduction after split becomes $\mathcal{L}_2=\frac{1}{2}\left(\frac{\mathcal{G}_{L_2}}{\mathcal{H}_{L_2}}+\frac{\mathcal{G}_{R_2}}{\mathcal{H}_{R_2}}-\frac{\mathcal{G}_I}{\mathcal{H}_I}\right)-\gamma$. Vanilla XGBoost computes the loss reduction of every split candidates and applies \emph{argmax} on them to find the the relation between every $\mathcal{L}_1$ and $\mathcal{L}_2$. In SS, loss reductions are calculated as data shares, e.g., $\langle\mathcal{L}_1\rangle$ and $\langle\mathcal{L}_2\rangle$. Unfortunately, they cannot be restored to $\mathcal{L}_1$ and $\mathcal{L}_2$ to finish the comparison directly due to security concerns. Fang et al. \cite{fang2020hybrid} reconstruct \emph{argmax} operation as the judgment on whether the difference $\mathcal{L}_{diff}=\mathcal{L}_1-\mathcal{L}_2$ is greater than 0. Thus in each participant $P_m(m=1,2)$, $P_m$ computes $\langle \mathcal{L}_{diff}\rangle^m=\langle\mathcal{L}_1\rangle^m-\langle \mathcal{L}_2\rangle^m$. For these two participants, comparison between binary number of $\langle \mathcal{L}_{diff}\rangle^1$ and $\langle \mathcal{L}_{diff}\rangle^2$ bit-by-bit can determine the sign of $\mathcal{L}_{diff}$ without restoring it. Thus they determine the relation between $\mathcal{L}_1$ and $\mathcal{L}_2$ securely.

However, calculating $\langle\mathcal{L}_1\rangle$ and $\langle\mathcal{L}_2\rangle$ needs \emph{division} operations on the SS data shares. There is no direct \emph{division} operation under the SS scheme. \cite{fang2020hybrid} uses a combination of \textbf{ADD} and \textbf{MUL} to approximate \emph{division}, but the calculation overhead is not tolerable, which we will present an analysis in Section~\ref{sec6.1}. In addition, each loss reduction contains more than one fraction, which increases complexity further. Besides, implementation in \cite{fang2020hybrid} conducts the comparison of $\langle\mathcal{L}_{diff}\rangle^1$ and $\langle \mathcal{L}_{diff}\rangle^2$ by multiplexers, and doesn't provide solution or analysis for the multi-participant scenario. From above we find two problems: (1) Calculating loss reduction directly is of high computation complexity; (2) Existing work under the SS scheme is not flexible for multi-participant. Hence we scrutinize the split candidate selection process and propose a novel computation workflow named \textbf{SecureArgmax}. It combines the calculation of loss reductions and \emph{argmax} together to remove \emph{division} operations completely, which simplifies split candidate selection. Additionally, our new design can be applied to more than three participants.
 
To illustrate, the difference of $\mathcal{L}_1$ and $\mathcal{L}_2$ is expressed as:
\begin{equation}\label{equation6}
    \mathcal{L}_{diff}=\mathcal{L}_1-\mathcal{L}_2=\frac{1}{2}\left(\frac{\mathcal{G}_{L_1}}{\mathcal{H}_{L_1}}+\frac{\mathcal{G}_{R_1}}{\mathcal{H}_{R_1}}-\frac{\mathcal{G}_{L_2}}{\mathcal{H}_{L_2}}-\frac{\mathcal{G}_{R_2}}{\mathcal{H}_{R_2}}\right).
\end{equation}
Thus, the relation between $\mathcal{L}_1$ and $\mathcal{L}_2$ is determined via the sign of $\mathcal{L}_{diff}$. By reduction of fractions to the common denominator, it can be written as:
\begin{equation}\label{equation7}
    2\mathcal{L}_{diff}=\frac{\mathcal{H}_{R_1}\mathcal{H}_{R_2}(\mathcal{G}_{L_1}\mathcal{H}_{L_2}-\mathcal{G}_{L_2}\mathcal{H}_{L_1})+\mathcal{H}_{L_1}\mathcal{H}_{L_2}(\mathcal{G}_{R_1}\mathcal{H}_{R_2}-\mathcal{G}_{R_2}\mathcal{H}_{R_1})}{\mathcal{H}_{L_1}\mathcal{H}_{L_2}\mathcal{H}_{R_1}\mathcal{H}_{R_2}}.
\end{equation}
For simplicity, we denote the numerator on the right side of Eq.~\eqref{equation7} as $\mathcal{G}$, while the denominator is represented by $\mathcal{H}$. Thus we have $2\mathcal{L}_{diff}=\frac{\mathcal{G}}{\mathcal{H}}$ as the judgment expression. To obtain the sign of $\mathcal{L}_{diff}$, MP-FedXGB only needs to determine the sign of $\mathcal{G}$ and $\mathcal{H}$ respectively. Every participant computes the share of $\mathcal{H}$ as $\langle\mathcal{H}\rangle=\left(\langle\mathcal{H}_{L_1}\rangle\otimes \langle\mathcal{H}_{L_2}\rangle\right)\otimes\left(\langle\mathcal{H}_{R_1}\rangle\otimes \langle\mathcal{H}_{R_2}\rangle\right)$, similarly for $\mathcal{G}$. To clarify, $\langle\mathcal{G}_{Y_1}\rangle$ and $\langle\mathcal{G}_{Y_2}\rangle$ correspond to any different elements of $\langle \boldsymbol{\mathcal{G}}_Y\rangle$, while $\langle\mathcal{H}_{Y_1}\rangle$ and $\langle\mathcal{H}_{Y_2}\rangle$ correspond to any different elements of$\langle \boldsymbol{\mathcal{H}}_Y\rangle$, $\forall Y\in\{L,R\}$. $\langle \boldsymbol{\mathcal{G}}_Y\rangle$ and $\langle \boldsymbol{\mathcal{H}}_Y\rangle$ are matrices defined in Section~\ref{sec4.2} and Algorithm~\ref{alg:build}. Restoration of $\mathcal{H}$ and $\mathcal{G}$ can determine the sign of these two terms respectively, further determine the sign of $\frac{\mathcal{G}}{\mathcal{H}}$, which is identical to $\mathcal{L}_{diff}$. In our design, $\mathcal{H}$ is restored on $P_1$ and $\mathcal{G}$ is restored on $P_2$. $P_1$ and $P_2$ judge the sign of these two terms respectively. Then $P_2$ sends the judgment to $P_1$. Finally $P_1$ determines the sign of $\frac{\mathcal{G}}{\mathcal{H}}$ and broadcasts it to all $P_m$s, therefore the relation between $\mathcal{L}_1$ and $\mathcal{L}_2$ is determined. This process is shown in Fig.~\ref{fig:sign}. Executing such process between all split candidates will select out the split candidate with the biggest loss reduction $\mathcal{L}_{split}^*$ without calculating Eq.~\eqref{equation7} directly. Besides, the sign of $\mathcal{L}_{split}^*$ is also important. Negative $\mathcal{L}_{split}^*$ indicates the current node as a leaf node, which doesn't split further. Similarly, by reduction of fractions, $\mathcal{L}_{split}^*$ can be expressed as one fraction. Then the same process as determining the sign of $\mathcal{L}_{diff}$ is conducted to obtain the sign of $\mathcal{L}_{split}^*$.

In our designed process, although restored $\mathcal{H}$ and $\mathcal{G}$ are known by $P_1$ and $P_2$ respectively, they won't exchange these data owing to semi-honest premise. Consequently, our \textbf{SecureArgmax} prevents participants from any access to the original value of $\mathcal{L}_{diff}$. Moreover, $\mathcal{H}$ and $\mathcal{G}$ are confused thoroughly, knowing one of them is not sufficient for inferring original information. Therefore leakage of sensitive data is avoided. 

\begin{figure}[htbp]
    \centering
    \includegraphics[width=7cm]{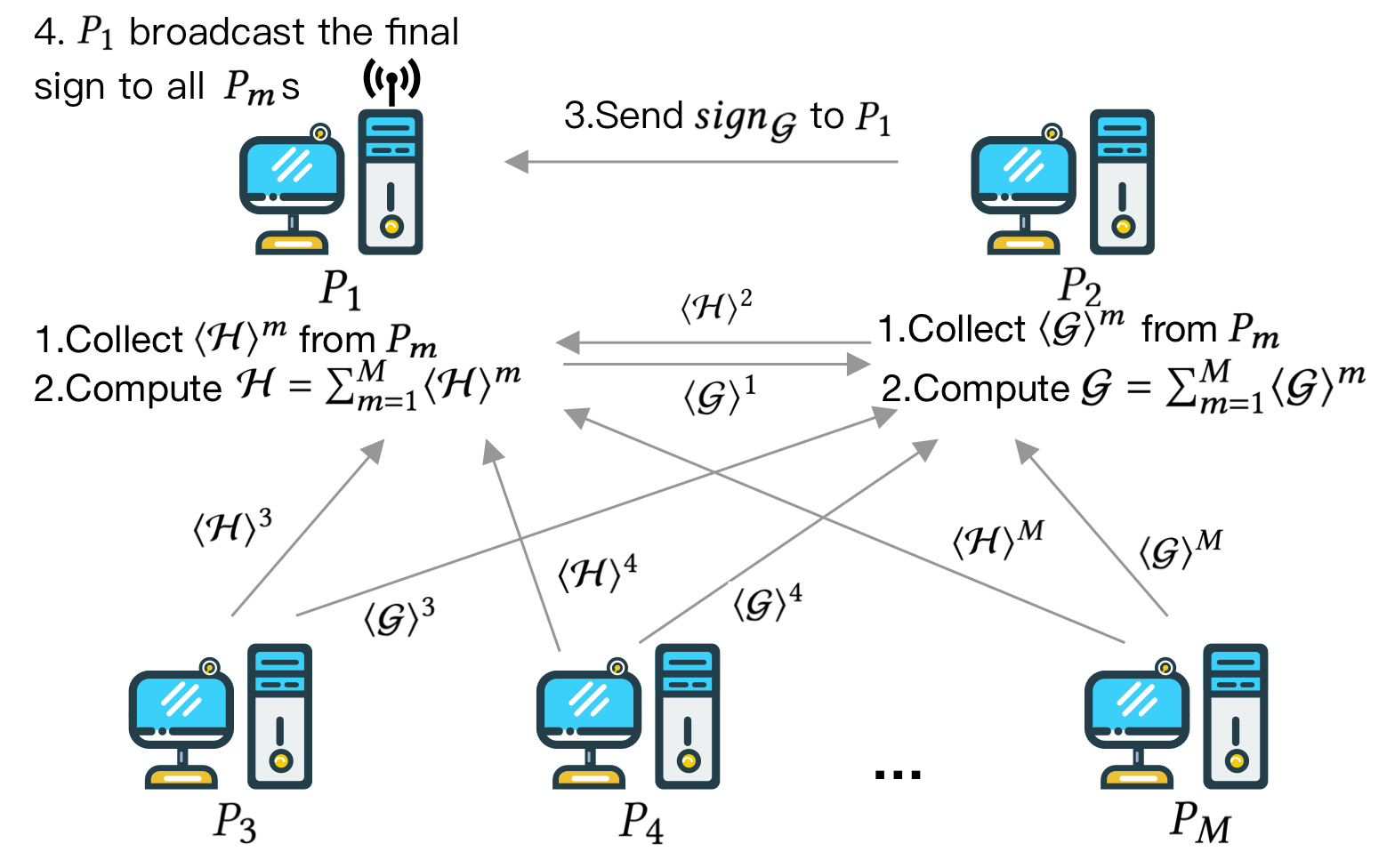}
    \caption{Process on determining the sign of one fraction.}
    \label{fig:sign}
\end{figure}

\subsection{Leaf Weight Computation}\label{sec4.4}
Calculating leaf weight in a distributed way is another crucial problem for training MP-FedXGB. We name this procedure as \textbf{SecureLeafWeight}. XGBoost computes leaf weight as $w=-\frac{\sum_{i\in I_u}g_i}{\sum_{i\in I_u}h_i+\lambda}$ from Eq.~\eqref{equation4}, which needs \emph{division}. As mentioned above, \emph{division} is not defined in SS, hence calculation of fractions is difficult with shared data. Though it can be approximated with addition, subtraction, and multiplication, the difficulty of choosing the proper initial point to converge is not negligible. Meanwhile, the huge amount of iterations would be still time-consuming. 

In fact, leaf weights computation can be viewed as a distributed optimization problem with special local data. In SS every participant $P_m$ holds $\langle\sum_{i\in I_u}h_i+\lambda\rangle^m$ and $\langle\sum_{i\in I_u}g_i\rangle^m$ in its local storage, and these data can recover $a=\sum_{i\in I_u}h_i+\lambda$ and $b=\sum_{i\in I_u}g_i$ according to the definition of \textbf{SHR}. If we denote $\langle a\rangle^m=\langle\sum_{i\in I_u}h_i+\lambda\rangle^m$ and $\langle b\rangle^m=\langle\sum_{i\in I_u}g_i\rangle^m$, the problem of calculating $w=-\frac{\sum_{i\in I_u}g_i}{\sum_{i\in I_u}h_i+\lambda}$ is equivalent to the following quadratic optimization problem
\begin{equation}\label{equation8}
    \arg\min_w\frac{1}{2}\sum_{m=1}^M\langle a\rangle^mw^2+\sum_{m=1}^M\langle b\rangle^mw.
\end{equation}
Because of the requirement of convex loss function in XGBoost, $\sum_{m=1}^M\langle a\rangle^m=\sum_{i\in I_u}h_i+\lambda$ is positive for any $\lambda>0$. The above problem is strongly convex and can be solved by gradient descent method efficiently. By choosing descent step-size $\eta=\frac{1}{\sum_{m=1}^M\langle a\rangle^m}$, we adopt the update of $w$ by
\begin{equation}\label{equation9}
    w^{(1)}=w^{(0)}-\eta\left(\sum_{m=1}^M\langle a\rangle^mw^{(0)}+\sum_{m=1}^M\langle b\rangle^m\right).
\end{equation}
With $w^{(0)}=0$, it has $w^{(1)}=-\frac{\sum_{m=1}^M\langle b\rangle^m}{\sum_{m=1}^M\langle a\rangle^m}=-\frac{\sum_{i\in I_u}g_i}{\sum_{i\in I_u}h_i+\lambda}$ as the final solution. Thereby only one descent step is needed. For the distributed scenario, each $P_m$ can calculate
\begin{equation}\label{equation10}
    \langle w^{(1)}\rangle^m=\langle w^{(0)}\rangle^m-\eta\left(\langle a\rangle^m\otimes\langle w^{(0)}\rangle^m+\langle b\rangle^m\right).
\end{equation}
A summation of these equations from different $P_m$s is equivalent to Eq.\eqref{equation9} according to the results shown in Theorem~\ref{theorem1} (mentioned in Section~\ref{sec3.3}). The $\langle w^{(1)}\rangle^m$ is stored in a distributed way and can be restored to $w^{(1)}$.

However, if data $a$ is sensitive, its restoration is strictly prohibited, thus the precise step-size $\eta=\frac{1}{\sum_{m=1}^M\langle a\rangle^m}$ is hard to obtain. We figure out another idea by adding a small positive perturbation $\sigma^m$ into each $\langle a\rangle^m$. Adding the perturbation term can effectively mask the value of $a$ during the step-size determination. Next, participant $P_1$ determines the new descent step-size $\eta'=\frac{1}{\sum_{m=1}^M(\sigma^m+\langle a\rangle^m)}$ and broadcasts it to other participants. Because each $\sigma^m$ is positive, $\eta'$ is smaller than $\eta$. Therefore, we only need to change the step-size rather than the problem itself. As long as the problem is strongly convex, a sufficient number of iterations will guarantee that the generated iterates can still achieve the global minimum solution. The descent update can be expressed as
\begin{equation}\label{equation11}
    w^{(t)}=w^{(t-1)}-\eta'\left(\sum_{m=1}^M\langle a\rangle^mw^{(t-1)}+\sum_{m=1}^M\langle b\rangle^m\right).
\end{equation}
Again, every term except $\eta'$ can be split to form the distributed optimization problem as Eq.~\eqref{equation10}. Thus $w^{(t)}$ can be solved and stored in a distributed way. The index $t$ needed for convergence can be predicted by analysis of relation between $a$ and $\sigma^m$ (Please see a detailed example shown in Appendix~\ref{appsec1}). Indeed, the above optimization method of fulfilling \emph{division} operation can be applied to more general cases where there has similar distributed data structures and also privacy concerns.

\subsection{Prediction}\label{sec4.5}
For a partial tree stored in different participants, prediction should be finished by cooperation. Inspired by existing work \cite{fang2020hybrid}, we adopt the similar idea of indicator vector to do the prediction called \textbf{SecurePredict}. We continue our example from Fig~\ref{fig:framework} with 4 participants. For vanilla XGBoost, prediction of the $i$-th instance $\boldsymbol{x}_i$ is expressed as $\hat{y}_i\triangleq\boldsymbol{s}\cdot\boldsymbol{w}$, shown in Fig~\ref{fig:Prediction}. Here ``$\cdot$'' means a dot product. For the distributed weights, we rewrite this equation as
\begin{equation}\label{equation12}
    \hat{y}_i=\boldsymbol{s}\cdot\sum_{m=1}^M\langle\boldsymbol{w}\rangle^m,
\end{equation}
where $\langle\boldsymbol{w}\rangle^m$ denotes the weight vector composed of weight shares produced by \textbf{SecureLeafWeight}.
\begin{figure}[htbp]
    \centering
    \includegraphics[width=10cm]{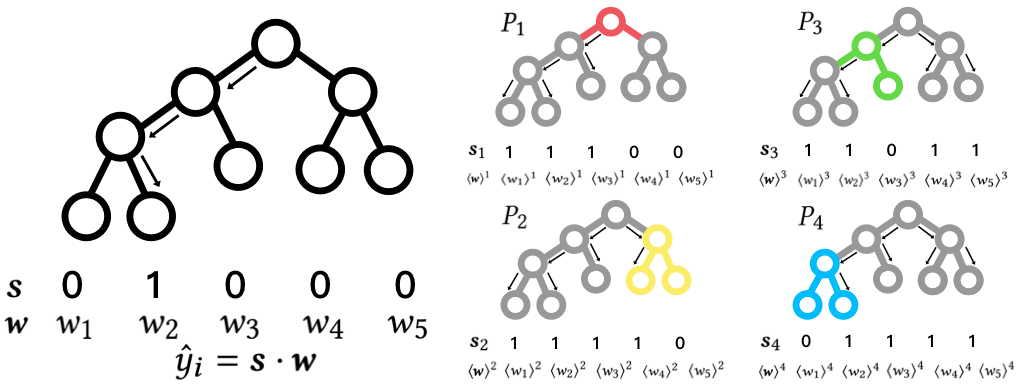}
    \caption{Process on determining the sign of one fraction.}
    \label{fig:Prediction}
\end{figure}

For a partial tree in $P_m$, if one node doesn't have split information, it will record ``Dummy'' during the tree learning stage, marked as gray nodes. The searching process is expressed as black arrows in Fig~\ref{fig:Prediction}. $P_m$ walks along a determined path from its colored nodes, but searches for both left and right branches when facing gray nodes. After the search is complete, the selected leaves are flagged as ``1'' else ``0''. Thus $P_m$ can obtain local indicator vector $\boldsymbol{s}_m$. Notice that $\prod_{m=1}^M\boldsymbol{s}_m=\boldsymbol{s}$, where $\prod$ means element-wise multiplication. Even though ``1'' is uncertain for every leaf, ``0'' is definite from each $P_m$ who has split information. So ``0'' can eliminate redundant ``1''. Actually, this multiplication is the ``AND'' operation on different partial trees and can filter out branches not chosen. Finally, only one leaf is determined. From Fig.~\ref{fig:Prediction} it's obvious that this operation can be finished even for more than two participants, therefore, it is suitable for our problem definition.

To better preserve privacy, participants should produce prediction $\langle\hat{y}_i\rangle$ locally and restore $\hat{y}_i$ on active $P_1$ rather than exchanging local indicator and weight vector directly. For $P_m$, $\boldsymbol{s}_m$ can be shared using $\textbf{SHR}$, thus $\boldsymbol{s}_m=\sum_{m'=1}^M\langle\mathbf{s}_m\rangle^{m'}$. With associativity and definition of \textbf{MUL}, we can rewrite $\prod$ as follows
\begin{equation}\label{equation13}
    \begin{aligned}
     \boldsymbol{s}=\prod_{m=1}^M\boldsymbol{s}_m
     =\prod_{m=1}^M\sum_{m'=1}^M\langle\boldsymbol{s}_m\rangle^{m'}
     =\sum_{m'=1}^M\langle\boldsymbol{s}_1\rangle^{m'}\otimes\langle \boldsymbol{s}_2\rangle^{m'}\otimes\cdots\langle \boldsymbol{s}_M\rangle^{m'}
     =\sum_{m=1}^M\langle\boldsymbol{s}\rangle^m.
    \end{aligned}
\end{equation}
We discover that local secret indicator $\langle \boldsymbol{s}\rangle$ can be obtained by \textbf{MUL} among shares of local indicator vectors from other participants. Substituting Eq.~\eqref{equation13} into Eq.~\eqref{equation12}, we have
\begin{equation}\label{equation14}
    \begin{aligned}
    \hat{y}_i=\sum_{m=1}^M\langle \boldsymbol{s}\rangle^{m}\cdot\sum_{m=1}^M\langle\boldsymbol{w}\rangle^m
    =\mathbf{1}\cdot\sum_{m=1}^M\langle\boldsymbol{s}\rangle^m\otimes\langle\boldsymbol{w}\rangle^m
    =\sum_{m=1}^M\langle\hat{y}_i\rangle^m.
    \end{aligned}
\end{equation}
From the above inference each $P_m$ can obtain partial prediction by calculating \textbf{MUL} on $\langle\boldsymbol{s}\rangle^m$ and $\langle\boldsymbol{w}\rangle^m$. Finally restoration in $P_1$ can generate the prediction $\hat{y}_i$. Hence $P_1$ makes prediction without exchanging local indicator vectors and weight vectors directly. This procedure can be executed in training to get updated value or prediction for instances.

\section{Security Discussion}\label{sec5}
In this section, we discuss the semi-honest premise which ensures the privacy of MP-FedXGB. With this guarantee, our model is secure for raw data $\boldsymbol{X}$ and intermediate values during computation on data shares. Nevertheless, the training framework may pose potential leakage to instance space, which further leaks data labels $\boldsymbol{y}$. We also carry out our analysis and provide possible solution.

\subsection{Semi-Honest Security}\label{sec5.1}
Our proposed model is based on the semi-honest adversarial setting. That is, all participants are honest-but-curious. We assume only the active participant $P_1$ is honest. Up to $M-1$ corrupt participants (all auxiliary participants) might cooperate with each other in order to gather private information. For example, if one participant has a share of data, it may restore sensitive information by gathering shares of corresponding data from other participants. As mentioned, there is no collusion between $P_1$ and any auxiliary participant $P_m(m\neq1)$. Therefore $P_1$ doesn't leak any raw data and intermediate information to other participants, as well as shares of them. Due to lack of data shares from $P_1$, collusion between auxiliary participants cannot infer the original sensitive data. Hence our MP-FedXGB is secure for raw data $\boldsymbol{X}$ and intermediate values under semi-honest assumption.

\subsection{Potential Leakage Towards Instance Space}\label{sec5.2}

Instances belonging to one leaf node have the same instance space, which tends to have the same target value or label. In MP-FedXGB, the indicator vector can effectively mask instance space, because the indicator vector on any node of the tree is generated for the whole instance set rather than a subset from its parent node. Even $P_m$ splits two leaf nodes, its own indicator vector contains split information for all data. The coarse-grained split result reflects by indicator vector of $P_m$ won't help much to leak instance space. Thus the ground truth labels are protected. There will be only one situation where $P_m (m\neq1)$ knows the fine-grained instance space, i.e., there is a direct path from the root to leaf that all the intermediate nodes are generated using information from $P_m$. In this situation, this $P_m$ can filter instance subsets from element-wise multiplications among the intermediate indicators. The simplest situation is that the depth of the tree is 2, and the only parent node split instances using features of $P_m$. 

To better protect the instance space, we add a mechanism to our tree building process called \emph{First-Layer-Mask}. As a consequence, the root node can only be split by $P_1$ for each tree in the ensemble model. In this design, instance space won't leak as $P_1$ knows its instance space while others can only receive a secret indicator after this split. For the next layer, no matter which participant does split, its result is for the whole instance set again. We cut the direct path from the root to leaf on the first layer by enforcing $P_1$ to perform the split, thus we are able to ensure the security of instance space. Experiments are conducted to examine the performance of the model before and after using this mechanism, which is presented in Section~\ref{sec7.3}.

\section{Complexity Analysis}\label{sec6}
In this section, we analyze the complexity of our proposed \textbf{SecureArgmax} against trivial \emph{argmax} with \emph{division} operations. In addition, we also compare the time consumption between methods using HE and our framework.

\subsection{Computation Complexity of SecureArgmax}\label{sec6.1}
Our proposed \textbf{SecureArgmax} computes the difference between loss reductions of different split candidates. Every participant computes its $\langle\mathcal{H}\rangle$ and $\langle\mathcal{G}\rangle$ using local shares. From Eq.~\eqref{equation7} these operations contain nine \textbf{MUL}s in total. Notice that we count the execution number within arbitrary one participant, because all participants execute the same process. Then binary search is conducted that the number of split candidates are halved after each comparison. Here the comparison among different splits is identical to computing difference. Suppose the dataset has $J$ features and $K$ split buckets for each feature. In each feature $\lceil \log_2K\rceil$ comparisons is needed, resulting in $9\lceil \log_2K\rceil$ \textbf{MUL}s. Here $\lceil\cdot\rceil$ means \emph{ceil}. Hence for $J$ features the algorithm computes $9J\lceil \log_2K\rceil$ \textbf{MUL}s. Next, \textbf{SecureArgmax} selects the best split among $J$ features, thus it conducts $\lceil \log_2J\rceil$ comparisons, resulting in $9\lceil \log_2J\rceil$ \textbf{MUL}s. Finally our method needs $9J\lceil \log_2K\rceil+9\lceil \log_2J\rceil$ \textbf{MUL}s in total. It's the key computation of this approach.

To show the efficiency of \textbf{SecureArgmax} better, we also try to simulate the \emph{argmax} with approximation approach of \emph{division} and analyze its number of \textbf{MUL}s. Goldschmidt's approach \cite{goldschmidt1964applications} uses multiplication and addition (or subtraction) to approximate \emph{division}, which is mentioned in \cite{fang2020hybrid}. Unfortunately, no detailed implementation is provided. Instead, we implement a similar approach, Newton's method. They both require iterations and combinations of many computation primitives. We elaborate this method as \textbf{DIV} in Appendix~\ref{appsec2}. During the split candidate selection in one tree node, the full instance set keeps the same. Consequently, $\mathcal{L}_{split}$ from Eq.~\eqref{equation3} can omit the last two terms, thus the expression of each loss reduction contains two different fractions. In our test on a simple Iris dataset \cite{Dua:2019}, the model will converge (prediction results don't change) with at least 20 iterations for our implemented Newton's method to converge, which includes 41 \textbf{MUL}s. Therefore, each loss reduction of different split candidates requires 82 \textbf{MUL}s to converge. For $J$ features and $K$ splits in each feature, there are $JK$ split candidates in total, resulting in $82JK$ \textbf{MUL}s for finishing operation \emph{argmax}.

Different numbers of \textbf{MUL}s taken in calculation for \textbf{SecureArgmax} and the simulated \emph{argmax} are presented in Table~\ref{tab:table2}. Our proposed \textbf{SecureArgmax} is obviously superior in efficiency. Besides, real-world applications may have more features and splits, which is more suitable for our \textbf{SecureArgmax}.

\begin{table}[htbp]
  \caption{Amount of \textbf{MUL}s in different methods}
  \begin{tabular}{ccl}
    \toprule
    Param & Using \textbf{DIV} & No \textbf{DIV} (ours)\\
    \midrule
    J=16, K=8    & 10,496 & 468\\
    J=16, K=16    & 20,992 & 612\\
    J=32, K=16   & 41,984 & 1,197\\
  \bottomrule
  \label{tab:table2}
\end{tabular}
\end{table}

\subsection{Computation Complexity against HE methods}\label{sec6.2}
Suppose HE methods need time $t_1$ to encrypt a data point, and $t_2$ to decrypt a data point. SecureBoost uses the Paillier algorithm to perform encryption and decryption. XGBoost requires the first-order gradient and second-order gradient, thus one data instance corresponds to two encryptions or two decryptions. Suppose there are $J$ features and $K$ split buckets for each feature on average, $\alpha$ fraction of features are stored outside $P_1$. For $N$ instances, encryption of $2N$ gradient information is needed. After bucket aggregation of the gradient information, $2\alpha JK$ decryption is taken to retrieve back aggregated gradient statistics from the auxiliary participants. If $e$ trees are built and each has a depth of $d$, for convenience we set $\alpha=0.5$, then the total time cost is $time_{HE}=2Nt_1+e(2^d-1)JKt_2$.

Besides, our framework mainly consume time on the bucket statistics aggregation, best split finding, and preparation for data to left or right branches. We only take into account the time consumption of \textbf{MUL} because it is the main computation. Our implementation of \textbf{MUL} can execute element-wise multiplication, thus is much faster and executes less in number of times. For bucket statistics aggregation, it executes $2JK$ \textbf{MUL}s. In our implementation, we actually suppress it to $2J$ \textbf{MUL}s by a proper design. For split finding and sign determination, $9J\lceil \log_2K\rceil+9\lceil \log_2J\rceil+8$ \textbf{MUL}s are executed. In preparation of data on child nodes, 6 \textbf{MUL}s are conducted. During training, each prediction update requires element-wise \textbf{MUL}s between $M$ shared local indicator vectors, which have $N$ elements. Thus the prediction needs $MN$ \textbf{MUL}s. $e-1$ rounds of training need the update of prediction, causing $MN(e-1)$ \textbf{MUL}s in total. Suppose each \textbf{MUL} needs $t_3$, in total the time cost of operation inside our framework is $time_{MP-FedXGB}=[e(2^d-1)(2J+9J\lceil \log_2K\rceil+9\lceil \log_2J\rceil+14)+MN(e-1)]t_3$. For executing one python package\footnote{https://github.com/n1analytics/python-paillier} of the Paillier algorithm \cite{1999Public} with a 1024-bit key, it takes $t_1=0.01s$ and $t_2=0.19s$ while our \textbf{MUL} takes $t_3=0.0005s$ on average. Suppose we have $M=4$ participants and $N=10,000$ instances. If we build a model with $e=3$ trees, max depth $d=3$ for each, $J=10$ features and $K=10$ buckets, we will have $time_{HE}=599$s and $time_{MP-FedXGB}=44.52$s. Notice that we don't calculate the time cost of communication because we just simulate our framework in one machine with virtual participants. We aim to show the computational efficiency of our framework.

\section{Experiment}\label{sec7}
In this section, we investigate the efficiency and performance of MP-FedXGB in different model structures. Also, the performance of our model with the proposed \emph{First-Layer-Mask} in Section~\ref{sec5.2} is illustrated. Here, to save space and better explore the performance, all experiments are conducted on public datasets for binary classification tasks but similar observations can be made for regression tasks. 

\textbf{Dataset1}\footnote{https://www.kaggle.com/c/GiveMeSomeCredit/data}: This dataset provides data for exploring whether a user would suffer from financial problems. It contains 150,000 instances and 10 features.

\textbf{Dataset2}\footnote{http://archive.ics.uci.edu/ml/datasets/Adult}: This dataset is prepared for the prediction task to determine whether a person earns over 50K a year or not. It contains 48,842 instances and 14 features. Most of its features are categorical features, we employ one-hot encoding to transform these features. Finally, the processed dataset has 108 features.

These two datasets both have missing values. We conduct our experiments on cleaned datasets. To make the result more plausible, we keep the ratio of 0-1 class the same from the testing set to the training set. Besides, min-max normalization is applied. We take \textbf{Dataset1} for exploration of the relation between different training instance sizes, feature sizes, and runtime. We limit the biggest size of the training instance set to 30,000. In the performance comparison with different models, we take 80\% for training and 20\% for testing. So \textbf{Dataset1} has 30,000 training instances and 7,500 testing data, while \textbf{Dataset2} has 26,049 for training and 6,512 for testing. The default model structure is of three trees, each has a max depth of three. We set 4 participants and 1 coordinator, and partition the instance feature according to the proportion of 10\%, 20\%, 30\%, and 40\% to $P_1$, $P_2$, $P_3$, and $P_4$ respectively. We set $\lambda=1$ and $\gamma=0.5$, and they are equally shared as $\langle \lambda\rangle$ and $\langle \gamma\rangle$ to the 4 participants. The framework is implemented using mpi4py \cite{2011Parallel}, a python package that allows distributed and parallel programming. All experiments are conducted on a Mac with 16GB RAM and Intel Core i9 CPU.

\subsection{Scalability}\label{sec7.1}
Since XGBoost yields better performance when the depth and number of trees grow, we first examine how the computation time of the proposed MP-FedXGB changes with the model structure. The experiment is conducted on \textbf{Dataset1} with 10,000 instances. The results are presented in Fig.~\ref{fig4a} and Fig.~\ref{fig4b}.

From Fig.~\ref{fig4a} we can see that the runtime is approximately linear with the number of trees. However, Fig.~\ref{fig4b} shows that the runtime goes exponentially with respect to the depth of each tree. These are consistent with vanilla XGBoost since the number of tree nodes is proportional to the number of trees, and grows exponentially with respect to the depth. Time-consuming computations mainly exist in split finding, which is conducted for node generation. Besides, the result in Fig.~\ref{fig4a} with three trees and $depth=3$ has a runtime of approximately 50 seconds, which agrees with the analysis of runtime of our model shown in Section~\ref{sec6.2}.

\begin{figure*}[htbp]\label{fig4}
\centering
\subfigure[Number of Trees v.s. runtime]{
\label{fig4a}
\includegraphics[width=6cm]{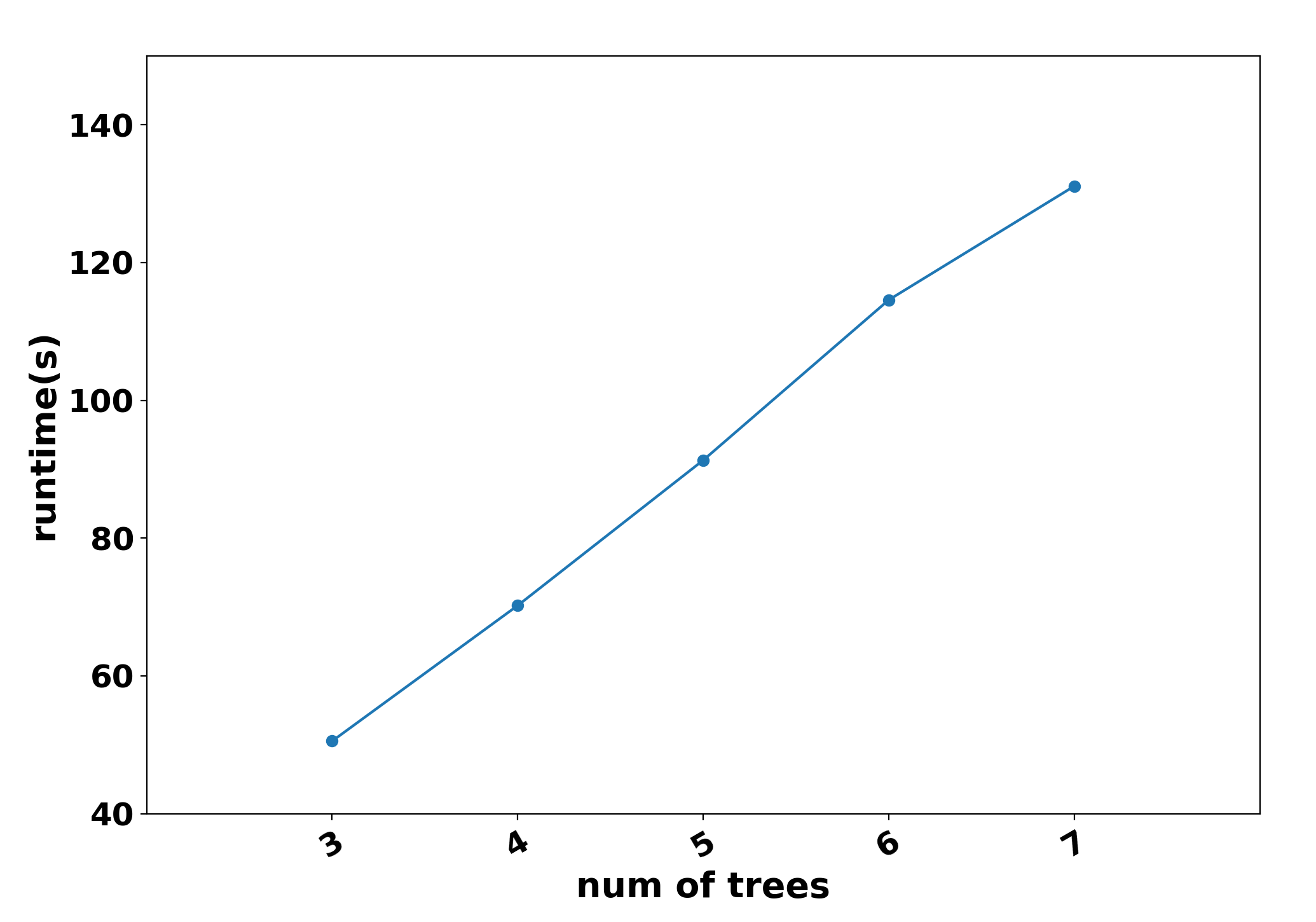}
}
\subfigure[Max depth v.s. runtime]{
\label{fig4b}
\includegraphics[width=6cm]{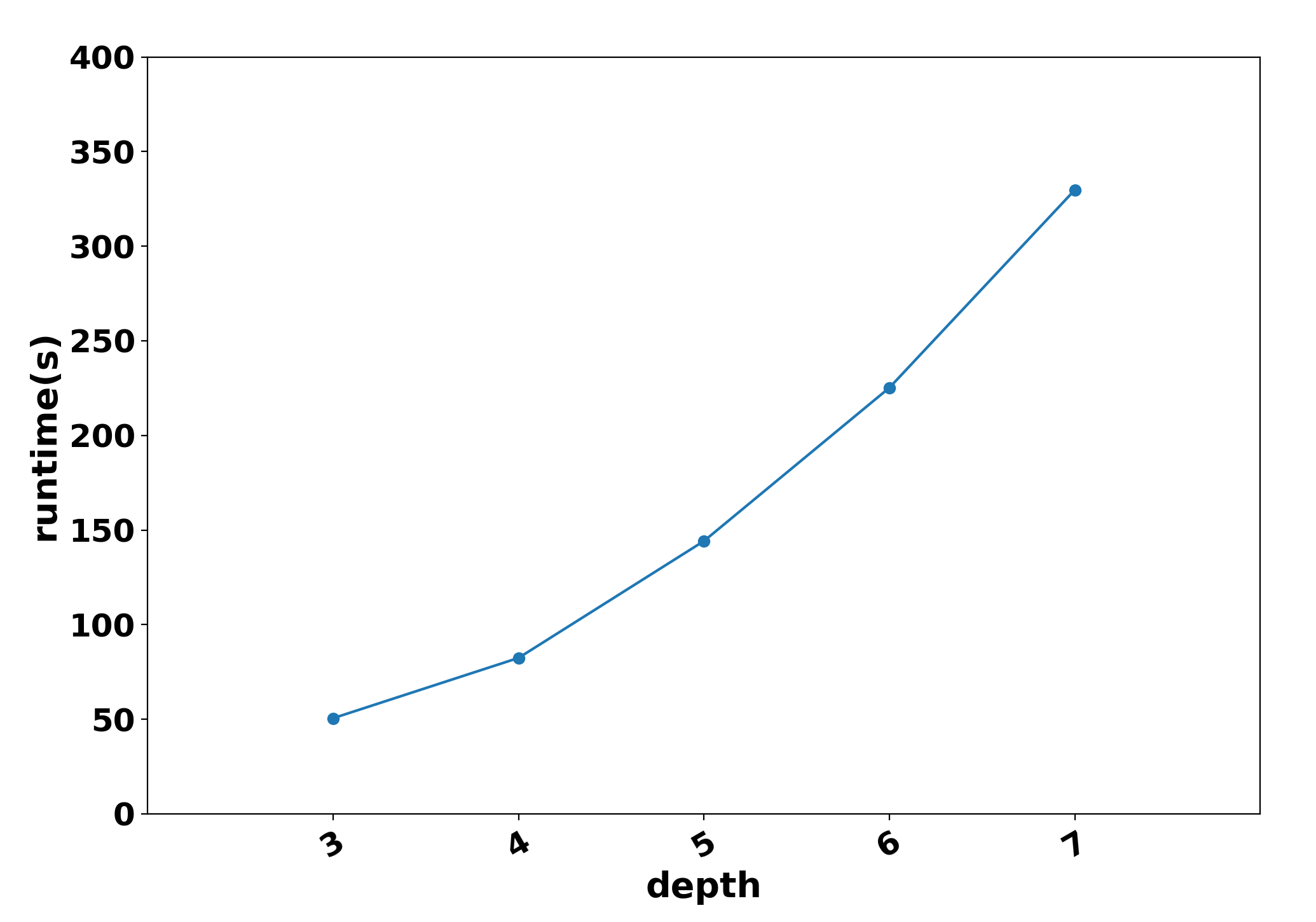}
}
\caption{Runtime in Different Tree Structures, Feature Numbers}
\end{figure*}

In addition, we explore the relationship in terms of different training instance sizes and feature sizes. Specifically, we vary instance size among \{5,000, 10,000, 30,000\} and change feature sizes among \{10, 50, 100, 500, 1,000\}. We augment the feature set by simple repetition of original features. Results in Fig.~\ref{fig:size} show that runtime has a linear relationship with the feature size. In addition, the runtime grows for a larger instance set, which is consistent with the expression of time complexity we provided in Section~\ref{sec6.2}.

\begin{figure}[htbp]
    \centering
    \includegraphics[width=6cm]{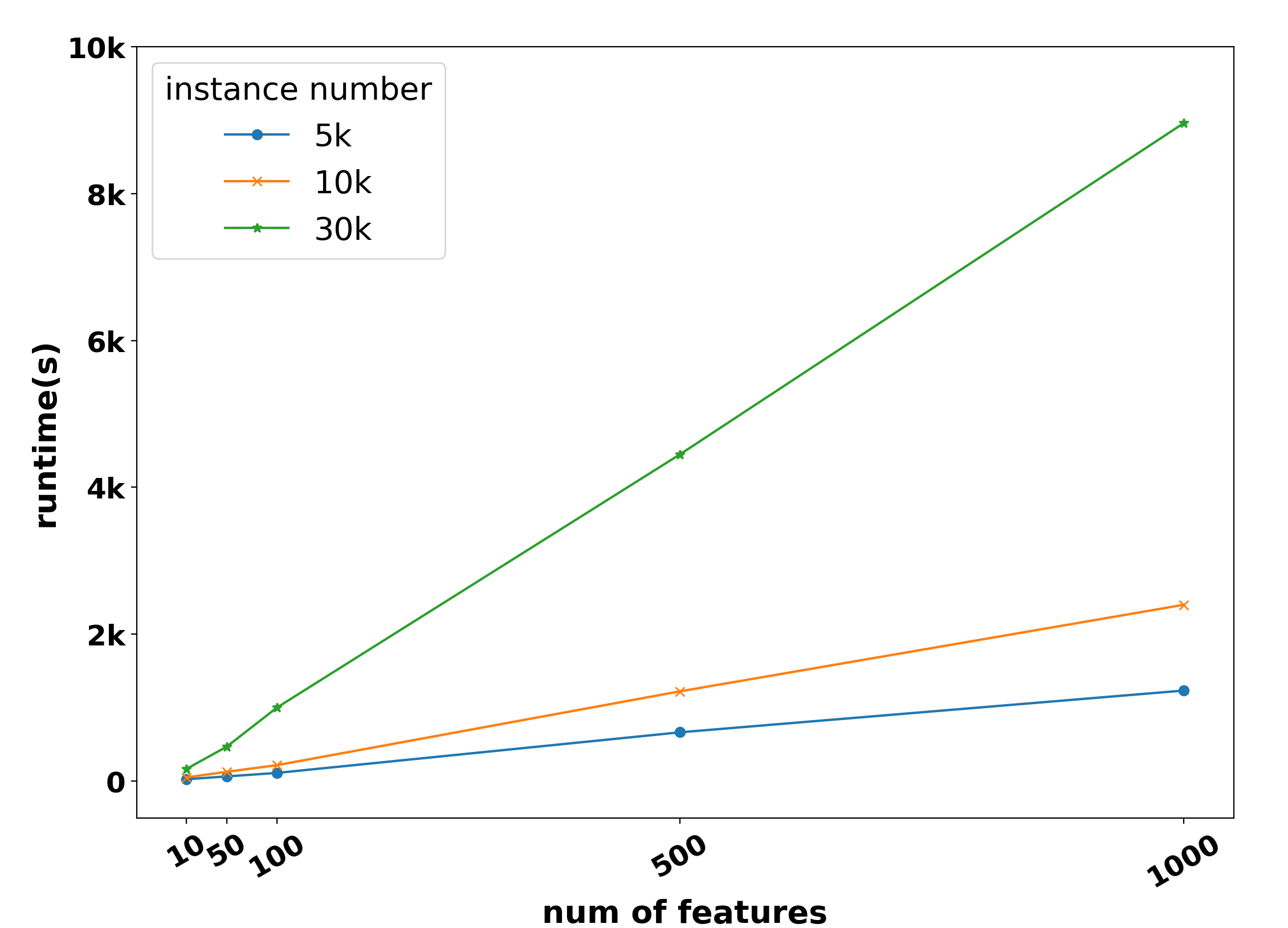}
    \caption{Relationships with respect to different instance sizes, feature sizes and runtime}
    \label{fig:size}
\end{figure}

\subsection{Performance of MP-FedXGB}\label{sec7.2}
To demonstrate the superiority of our proposed MP-FedXGB, we conduct comparisons between our model and vanilla XGBoost\footnote{https://github.com/dmlc/xgboost}. Vanilla XGBoost is trained in a centralized fashion and all hyperparameters are the same as ours. It should be noted that MP-FedXGB we implemented may be slightly different from the original one (like quantile sketch split schema), so all metrics would differ a little, but are exactly the same as our implemented centralized XGBoost. We test the performance of both algorithms with three trees and different depths. As shown in Table~\ref{tab:table3}, MP-FedXGB exhibits comparable performance as the vanilla XGBoost in all metrics, and even better in some situations. This testifies the validity of MP-FedXGB.

\begin{table*}[htb]
\caption{Comparison with respect to Different Models under Different Datasets and Depths}
\label{tab:table3}
\begin{tabular}{c|c|c|c|c|c|c|c|c|c|c}
\hline
\multirow{2}{*}{Dataset} & \multirow{2}{*}{Depth} & \multicolumn{3}{c|}{XGBoost} & \multicolumn{3}{c|}{MP-FedXGB} & \multicolumn{3}{c}{MP-FedXGB*} \\ \cline{3-11}
 &  & ACC & F1 & AUC & ACC & F1 & AUC & ACC & F1 & AUC \\ \hline
\multicolumn{1}{r|}{\multirow{3}{*}{Dataset1}} & 3 & 0.9319 & 0.1354 & 0.8020 & 0.9324 & 0.1914 & 0.8410 & 0.9331 & 0.1824 & 0.8255 \\
\multicolumn{1}{r|}{} & 4 & 0.9316 & 0.1818 & 0.8302 & 0.9321 & 0.2253 & 0.8377 & 0.9317 & 0.2099 & 0.8342 \\
\multicolumn{1}{r|}{} & 5 & 0.9327 & 0.2337 & 0.8313 & 0.9329 & 0.2481 & 0.8379 & 0.9323 & 0.1962 & 0.8397 \\ \hline
\multirow{3}{*}{Dataset2} & 3 & 0.8455 & 0.6192 & 0.8639 & 0.8358 & 0.5965 & 0.8850 & 0.8332 & 0.5954 & 0.8844 \\
 & 4 & 0.8500 & 0.6415 & 0.8844 & 0.8423 & 0.6403 & 0.8940 & 0.8404 & 0.6156 & 0.8936 \\
 & 5 & 0.8561 & 0.6447 & 0.9014 & 0.8449 & 0.6481 & 0.8966 & 0.8424 & 0.6420 & 0.8945 \\ \hline
\end{tabular}
\end{table*}
% \vspace{-0.5cm}

\subsection{Performance Of First Layer Mask}\label{sec7.3}
To verify our \emph{First-Layer-Mask} mechanism, we compare the original model to the model with this mechanism applied to, denoted as MP-FedXGB$^*$. Table~\ref{tab:table3} shows the results with three trees and different depths. Our experiment shows that even we force $P_1$, the participant with the label, to perform a split at the beginning in every tree, the proposed model won't suffer much from performance loss. We infer that XGBoost itself is robust and powerful, so an extra split won't impact the model significantly.

\section{Conclusion}\label{sec8}
In this paper, we have proposed an SS-based \emph{multi-party} federated XGBoost model, which is applicable to the vertical federated learning scheme. We have carefully designed the split finding process and carried out the leaf weight computation via solving a quadratic optimization problem in a distributed way, making our model fast in runtime and superior in performance even facing large-scale training scenarios in real-world applications. 

A secret-sharing scheme is a special encryption method that provides both safety and efficiency. We believe that the research in federated learning with SS is just in its infancy. By extending the basic idea of MP-FedXGB, we hope to generalize our work and develop an efficient and secure vertical federated learning approach for broader machine learning models. Besides, we expect to further reduce the communication overhead and simplify the computation in our future work.
%%
%% The next two lines define the bibliography style to be used, and
%% the bibliography file.

%\bibliography{sample-manuscript}

\appendix
\section{Convergence Analysis of Gradient Descent with Perturbation}\label{appsec1}
To give a clearer analysis, we suppose that our task is a two-class classification problem. Without loss of generality,  we can write the descent iteration in Eq.\eqref{equation11} as the following
\begin{equation}\label{equation15}
\begin{aligned}
w^{(t)}=w^{(t-1)} - \frac{1}{ra}(aw^{(t-1)}+b)=\frac{r-1}{r}w^{(t-1)}-\frac{b}{ra},
\end{aligned}
\end{equation}
where we choose the descent step-size as $\eta'=\frac{1}{ra} (r\ge1)$,  and express $a=\sum_{m=1}^M\langle a\rangle^m$ and $b=\sum_{m=1}^M\langle b\rangle^m$ for simplicity. So, we have $w^{(t)}+\frac{b}{a}=\frac{r-1}{r}(w^{(t-1)}+\frac{b}{a})=...=(\frac{r-1}{r})^t(w^{(0)}+\frac{b}{a})$.  By solving this geometric progression, we can get $w^{(t)}=((\frac{r-1}{r})^t-1)\frac{b}{a}$, where we set the initial solution $w^0$ as 0.

For this special second-order optimization problem, the optimal solution is $w^*=-\frac{b}{a}$. Then we can analyze the iteration complexity through convergence to $\epsilon$-precision. For simplicity, we suppose that $w^*\le 0$, then the following relation holds
\begin{equation}\label{equation16}
\begin{aligned}
|w^{(t)}-w^*|=\left(\frac{r-1}{r}\right)^t\frac{b}{a}\le\epsilon,
\end{aligned}
\end{equation}
which implies
\begin{equation}\label{equation17}
    t\ge\frac{\ln\frac{a\epsilon}{b}}{\ln\frac{r-1}{r}}.
\end{equation}
Since that $r\ge1$, we have $\ln\frac{r-1}{r}<0$. If $\frac{a\epsilon}{b}\ge1$ then $w^{(t)}$ will converge with $t=1$. But in general $\vert w^*\vert=\vert-\frac{b}{a}\vert$ is not an extreme value, so $\frac{a\epsilon}{b}$ will be smaller than 1 easily with small precision factor $\epsilon$. For fixed $r$ and $\frac{a\epsilon}{b}\le1$, $t$ grows when $\frac{a\epsilon}{b}$ is smaller. We are curious about how big $t$ will be when $\frac{a\epsilon}{b}$ is extremely small. We first figure out the inferior limit of $\frac{a\epsilon}{b}$.

For general 0-1 classification problems and log-loss, we assume the dataset has $10^8$ samples that $\vert I_u\vert=10^8$, and the worst case is that all the samples are classified totally wrong. Let's denote $pred_i=\frac{1}{1+e^{-\hat{y}_i}}$, where $\hat{y}_i$ is the predicted label. With calculation of derivatives, $w^*=-\frac{\sum_{i\in I_u}(pred_i-y_i)}{\sum_{i\in I_u}pred_i(1-pred_i)+\lambda}$, where $y_i$ is the ground truth. We assume that all the samples of this leaf are of class 0 ($y_i=0$) and be miss-classified to 1. Suppose that $p\triangleq pred_i$ is the biggest and $\lambda\ge1$, then we have
\begin{equation}\label{equation18}
\begin{aligned}
    w^*=-\frac{\sum_{i\in I_u} pred_i}{\sum_{i\in I_u}pred_i(1-pred_i)+\lambda}
    \ge -\sum_{i\in I_u} pred_i
    \ge -\vert I_u\vert p.
\end{aligned}
\end{equation}
Here $\vert\cdot\vert$ means the cardinal number. Since miss-classified, $p$ can be set to 1, then $-\frac{b}{a}\ge-\vert I_u\vert p=-10^8$. Due to the fact that $\frac{a\epsilon}{b}\ge10^{-8}\epsilon=10^{-14}$ if $\epsilon = 10^{-6}$, the inferior limit of $\frac{a\epsilon}{b}$ can be obtained.

In our setting, each $P_m$ adds a perturbation $\sigma_m$ on its $\langle a\rangle^m$. We require $\sum_{m=1}^M\sigma_m\le \mu\lambda$, where $\lambda$ is the regularization term in Eq.~\eqref{equation2} and $\mu$ is a small constant bigger than 1. By recording the ratio $v=\frac{a+\sum_{m=1}^M\sigma_m}{\mu\lambda}$, we have $a=v\mu\lambda-\sum_{m=1}^M\sigma_m\ge (v-1)\mu\lambda$. In the following, we discuss two situations based on the different possible values of $v$.

\subsection{$v\le1$}\label{appsec1.1}
As $\mu\lambda=\mu(a-\sum_{i\in I_u}h_i)\le\mu a$, so $\frac{1}{a+\sum_{m=1}^M \sigma_m}=\frac{1}{v\mu\lambda}\ge\frac{1}{v\mu a}$, $r=v\mu$. Here, $v$ takes the range from $\frac{1}{\mu}$ to 1. If we set $\mu=2$, for $\epsilon=10^{-6}$ and small $\frac{a}{b}=10^{-8}$, the biggest $t\ge\frac{ln\frac{a\epsilon}{b}}{ln\frac{v\mu-1}{v\mu}}\ge\frac{ln10^{-14}}{ln\frac{1}{2}}\approx46.5$. Thus at most 47 iterations are needed to ensure the convergence of the algorithm. Moreover, if $v$ is smaller, the $t$ will decrease, meaning that $a+\sum_{m=1}^M\sigma_m$ is close to $a$, thus the step-size $\eta'$ is similar to original $\eta$. The relation between different $v$ and iteration number is shown in Fig.~\ref{fig:Complexity1}. In implementation we use $\lceil\frac{\ln\frac{a\epsilon}{b}}{\ln\frac{v\mu-1}{v\mu}}\rceil$ as the iteration number.

\subsection{$v>1$}\label{appsec1.2}
If we choose $r=v\mu$, then $\ln\frac{r-1}{r}$ will be close to 0 if $v$ grows. Thus the iteration number will increase rapidly with $v$. But we have already explained that $a=v\mu\lambda-\sum_{m=1}^M\sigma_m\ge(v-1)\mu\lambda$, thus $\mu\lambda\le\frac{a}{v-1}$. So $a+\sum_{m=1}^M\sigma_m\le a+\mu\lambda\le \frac{v}{v-1}a$, thus $\frac{1}{a+\sum_{m=1}^M\sigma_m}=\frac{1}{ra}\ge\frac{v-1}{va}$, $r=\frac{v}{v-1}$. This $r$ is more reasonable when $v$ is big. By substituting $r=\frac{v}{v-1}$ into Eq.~\eqref{equation17}, we know that the actual bound of iteration number should be $t\ge\frac{\ln\frac{a\epsilon}{b}}{\ln\frac{1}{v}}$. We provide the function graph as Fig.~\ref{fig:Complexity2} to observe the relation between $v$ and $t$. As shown in the line with cross, bigger $v$ suppresses iteration number quickly. It means if the original value $a$ is big enough, the perturbation won't affect the descent step-size too much. $r$ will be close to 1, driving $\ln\frac{r-1}{r}$ to negative infinity easily. Then the worst case iteration complexity $t$ will drop quickly. The algorithm just takes a little more steps to reach the convergence. It's in accordance with our intuition. 

\begin{figure*}[h!]\label{fig:Complexity}
\centering
\subfigure[Number of iterations when $v\le1$]{
\label{fig:Complexity1}
\includegraphics[width=6cm]{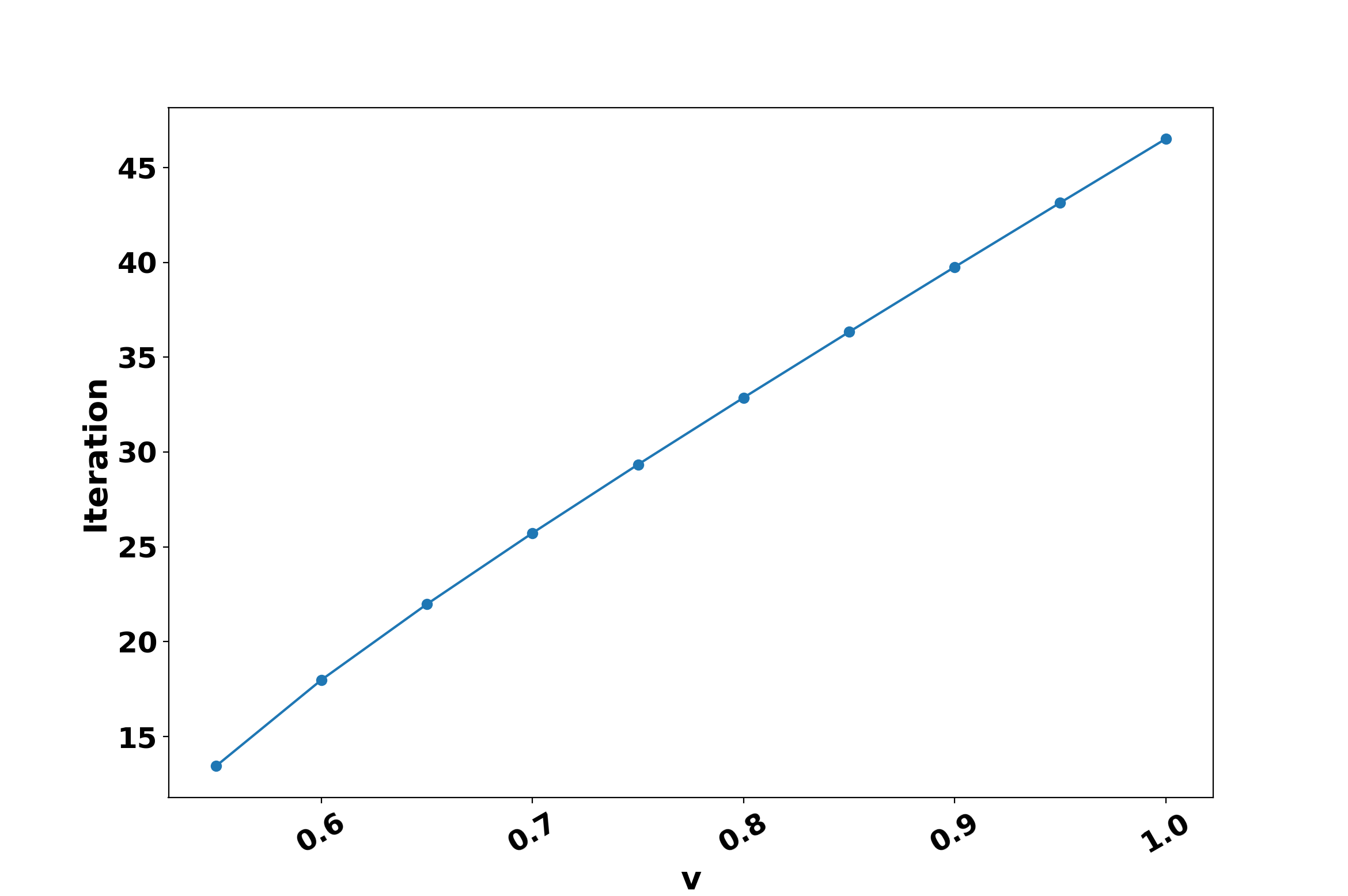}
}
\subfigure[Number of iterations when $v>1$]{
\label{fig:Complexity2}
\includegraphics[width=6cm]{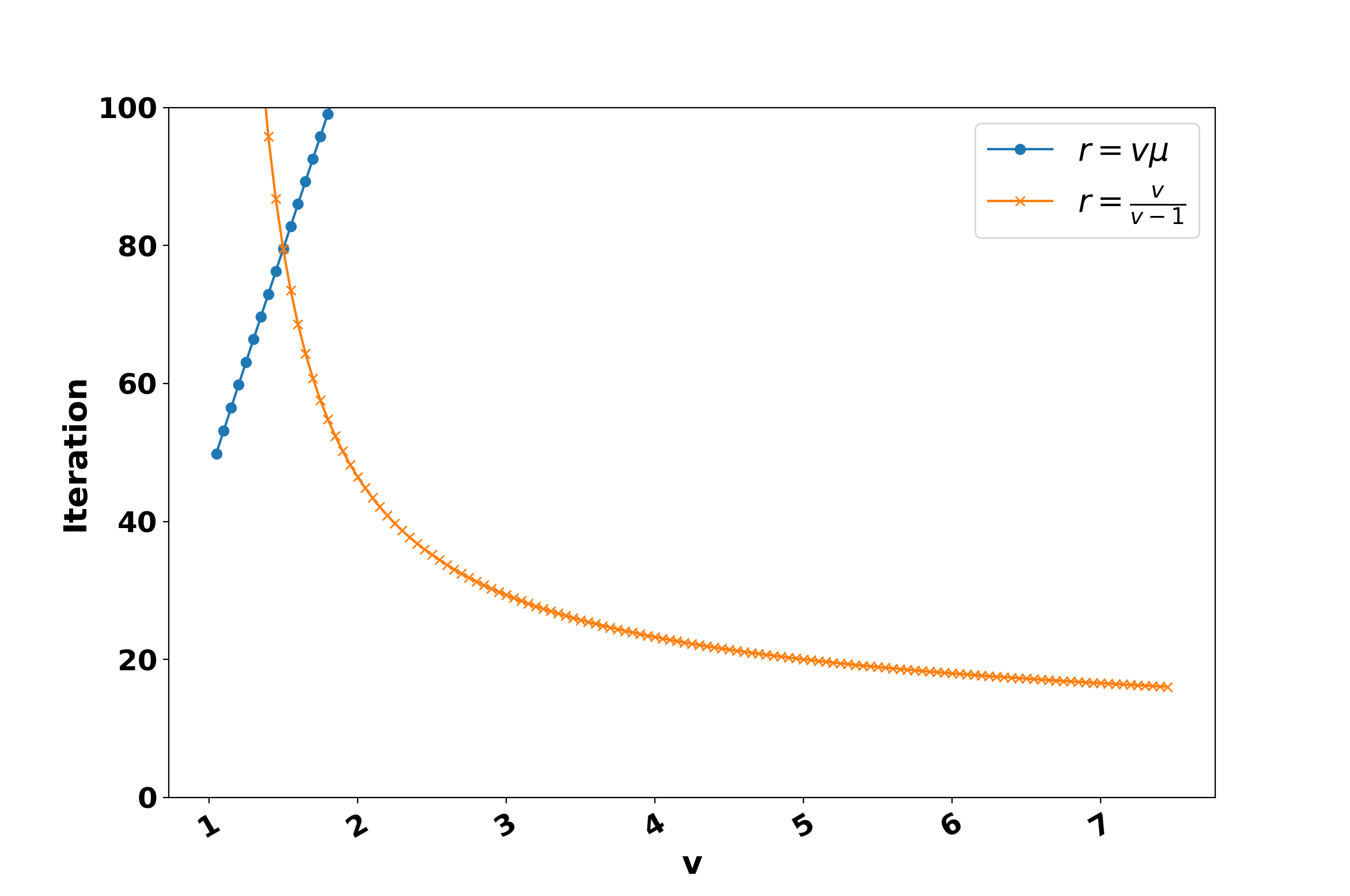}
}
\caption{Number of iterations with respect to different $v$.}
\end{figure*}

However, when $v$ is close to 1, the number of iterations will be very large. Taking the conclusion from Appendix~\ref{appsec1.1}, we also plot the relation between $v$ and the number of iterations counted by $r=v\mu$ in Fig.~\ref{fig:Complexity2} with the dotted line. It is shown that choosing $r=v\mu$ is properer when $v$ is close to 1. By calculating $\frac{v}{v-1}=v\mu$, we get the intersection point $v=\frac{\mu+1}{\mu}$. Finally the algorithm just takes $t_{\min}=\min(\lceil\frac{\ln\frac{a\epsilon}{b}}{\ln\frac{v\mu-1}{v\mu}}\rceil, \lceil\frac{\ln\frac{a\epsilon}{b}}{\ln\frac{1}{v}}\rceil)$ steps to reach convergence. Notably, when $\mu=2$, $r=\frac{\mu+1}{\mu}*\mu=3$,  $t\ge\frac{\ln\frac{a\epsilon}{b}}{\ln\frac{r-1}{r}}=\frac{\ln10^{-14}}{\ln\frac{2}{3}}\approx79.5$, or 80 iterations will guarantee convergence of the algorithm.

All the above analyses show that the iteration complexity is acceptable even under extreme conditions. In practice we rarely reach the worst case in classification tasks. We can adjust the step size by judging $v$. 

In addition, our conclusion is feasible for regression tasks. We follow the premise that $w^*\le0$ and define $y'\triangleq \max_i y_i-\hat{y}_i$. For regression tasks, MSE loss has constant second-order gradient $h_i=1$. Thus we have
\begin{equation}\label{equation19}
\begin{aligned}
w^*=-\frac{\sum_{i\in I_u} g_i}{\sum_{i\in I_u} h_i + \lambda}
\ge-\frac{\sum_{i\in I_u}(y_i-\hat{y}_i)}{\vert I_u\vert}
\ge-\frac{\vert I_u\vert*(y')}{\vert I_u\vert}
=-y'.
\end{aligned}
\end{equation}
Retaking conclusions from Section~\ref{appsec1.1} and Section~\ref{appsec1.2}, at most 80 iterations are needed to converge if the dataset has an extreme large $y'$ such as $1\times10^8$. However, this great difference between initial prediction $\hat{y}_i$ and true label is rare in real-world applications. Even for the case where one really wants to predict large values, then a simple shift of data values or scaling can drift target values to a proper intersection. Then our gradient descent method can be performed the same as above. Notice that all the extreme values are just possible at the beginning. Since XGBoost learns residual error after the first tree, leaf weights on latter trees will be smaller in orders of magnitude, thus fewer iterations would be sufficient enough to guarantee the convergence. The above conclusion can also be proved when $w^*\ge0$ with a similar argument. Therefore, our distributed optimization method with positive perturbation on step size is feasible and provable lossless.

\section{Approximation of Division}\label{appsec2}
Goldschmidt's approach \cite{goldschmidt1964applications} uses multiplication and addition (or subtraction) to approximate \emph{division}, which is mentioned in \cite{fang2020hybrid}. Unfortunately, no detailed implementation is provided. Instead, we implemented a similar approach, Newton's method. They both require iterations and combinations of many computation primitives.

Suppose the problem is to compute $\frac{c}{d}$. To avoid direct \emph{division}, the algorithm needs to get $\frac{1}{d}$ somehow and does multiplication with $c$. In Newton's method, it sets a function $f(x)=d-\frac{1}{x}$. By using taylor-expansion it becomes $f(x)=f(x^{(0)})+f'(x^{(0)})(x-x^{(0)})$. If we set $f(x)=0$, then $x=x^{(1)}=x^{(0)}-\frac{f(x^{(0)})}{f'(x^{(0)})}=x^{(0)}(2-dx^{(0)})$ is obtained. This expression can be iterated arbitrarily to get $x^{(n)}$ as the following
\begin{equation}\label{equation20}
x^{(n)}=x^{(n-1)}(2-dx^{(n-1)}).
\end{equation}

Proper initialization of $x^{(0)}$ will guarantee the convergence of $x^{(n)}$ to $\frac{1}{d}$. Because XGBoost requires convex loss function $l(\cdot)$, it has $\sum_{i\in I_u}h_i+\lambda>0$. This is the denominator $d$ during seeking $\frac{c}{d}$ in computation of loss reductions of XGBoost, thus the converged value $\frac{1}{d}>0$. From Eq.~\eqref{equation20} we know that if $x^{(n-1)}<0$, $x^{(n)}$ will be smaller than 0, which contradicts to $\frac{1}{d}>0$. Thus it has $x^{(n)}>0$ for arbitrary $n$. Besides, from $x^{(n)}=x^{(n-1)}(2-dx^{(n-1)})>0$, we have $0<dx^{(n-1)}<2$. Hence we find that $0<dx^{(0)}<2$ holds for $n=1,2,...$. For security concerns, the original value of $d$ cannot be restored to determine the range of $x^{(0)}$. Participants should cooperate to generate the initial point $x^{(0)}$ which is as close as $\frac{2}{d}$. 

Indeed, the local data $\langle d\rangle^m$ can be expressed as $d'\times10^{\mu_m},d'\in(0.1,1)$. Here, we figure out one method by counting the order of magnitude. That is, each participant $P_m$ computes the order of magnitude $\mu_m$ of $\langle d\rangle^m$ locally. After that they send their $\mu_m$s to $P_1$. $P_1$ will record the biggest $\mu_m$ as $\mu_{\max}$, then it sets $x^{(0)}=10^{-(\mu_{\max}+1)}$. $dx^{(0)}$ will range between 0 and 2 when the number of total participants is less than 20, which is common in current vertically federated settings. Notice that a extremely small $x^{(0)}$ can guarantee $0<dx^{(0)}<2$, but will slow down the convergence to a steady $x^{(n)}$. So we need to estimate the original order of $d$, thus $x^{(0)}$ will be closer to $\frac{1}{d}$. Then every $P_m$ gets its $\langle x^{(0)}\rangle^m=\frac{x^{(0)}}{M}$ from $P_1$. In SS, each iteration of Eq.~\eqref{equation16} in $P_m$ can be written as:
\begin{equation}\label{equation21}
    \langle x^{(n)}\rangle^m=\langle x^{(n-1)}\rangle^m\otimes(\frac{2}{M}-\langle d\rangle^m\otimes\langle x^{(n-1)}\rangle^m).
\end{equation}
Here $\frac{2}{M}$ means equally distributing the constant 2 to $M$ participants. Thus all terms on the same position in Eq.~\eqref{equation21} can reconstruct the term in Eq.~\eqref{equation20}. Also, one \textbf{MUL} between final $x^{(n)}=\frac{1}{d}$ and $c$ is needed. We represent the whole bunch of iterations through convergence as \textbf{DIV}. If $n$ iterations of Eq.~\eqref{equation20} are needed to ensure the convergence, then one \textbf{DIV} takes $n+n+1=2n+1$ \textbf{MUL}s.
\end{document}